\newtheorem{theorem}{Theorem}
\newtheorem{remark}[theorem]{Remark}
\newtheorem{assumption}[theorem]{Assumption}
\newcommand{\be}{\begin{equation}}
\newcommand{\ee}{\end{equation}}
\newcommand{\ba}{\begin{array}}
\newcommand{\ea}{\end{array}}
\newcommand{\br}{\mathbb{R}}
\newcommand{\ex}{{\bf\mathbb{E}}}
\newcommand{\XCal}{\mathcal{X}}
\newcommand{\YCal}{\mathcal{Y}}
\newcommand{\etal}{\textit{{et al. }}}
\newcommand{\prox}{\textbf{\rm prox}}
\newcommand{\st}{\textnormal{s.t.}}
\newcommand{\sign}{\textnormal{sign}}
\newcommand{\argmin}{\mathop{\rm argmin}}
\newcommand{\LCal}{\mathcal{L}}
\newcommand{\bpm}{\begin{pmatrix}}
\newcommand{\epm}{\end{pmatrix}}
\title{Stochastic Primal-Dual Proximal ExtraGradient Descent for Compositely Regularized Optimization}
\author{
Tianyi Lin
\thanks{Department of Industrial Engineering and Operations Research, UC Berkeley, California, USA. Email: darren\_lin@berkeley.edu}
\and
Linbo Qiao
\thanks{College of Computer, National University of Defense Technology, Changsha, China.} 
\and
Teng Zhang
\thanks{Department of Management Science and Engineering, Stanford University, USA.} 
\and
Jiashi Feng
\thanks{National University of Singapore, Singapore.} 
\and
Bofeng Zhang$^{\dagger}$ 
}
\begin{document}
\maketitle

\begin{abstract}
We consider a wide range of regularized stochastic minimization problems with two regularization terms, one of which is composed with a linear function. This optimization model abstracts a number of important applications in artificial intelligence and machine learning, such as fused Lasso, fused logistic regression, and a class of graph-guided regularized minimization. The computational challenges of this model are in two folds. On one hand, the closed-form solution of the proximal mapping associated with the composed regularization term or the expected objective function is not available. On the other hand, the calculation of the full gradient of the expectation in the objective is very expensive when the number of input data samples is considerably large. To address these issues, we propose a stochastic variant of extra-gradient type methods, namely \textsf{Stochastic Primal-Dual Proximal ExtraGradient descent (SPDPEG)}, and analyze its convergence property for both convex and strongly convex objectives. For general convex objectives, the uniformly average iterates generated by \textsf{SPDPEG} converge in expectation with $O(1/\sqrt{t})$ rate. While for strongly convex objectives, the uniformly and non-uniformly average iterates generated by \textsf{SPDPEG} converge with $O(\log(t)/t)$ and $O(1/t)$ rates, respectively. The order of the rate of the proposed algorithm is known to match the best convergence rate for first-order stochastic algorithms. Experiments on fused logistic regression and graph-guided regularized logistic regression problems show that the proposed algorithm performs very efficiently and consistently outperforms other competing algorithms.
\end{abstract}

\noindent {\bf Keywords:} Compositely Regularized Optimization; Stochastic Primal-Dual Proximal ExtraGradient Descent.

\section{Introduction}
In this paper, we are interested in solving a class of convex optimization problems with both non-composite and composite regularization terms:
\begin{eqnarray}\label{prob}
\min\limits_{x\in\XCal} \ \ex_{\xi}\left[l(x,\xi)\right] + r_1(x) + r_2(Fx),
\end{eqnarray}
where $\XCal\subset\br^d$ is a convex compact set with diameter $D_x$, the regularization terms $r_1: \br^d\rightarrow\br$ and $r_2: \br^l\rightarrow\br$ are both convex but possibly nonsmooth, and $r_2$ is composed with a possibly non-diagonal penalty matrix $F\in\br^{l\times d}$ specifying the desired structured sparsity pattern in $x$. We denote $l(\cdot,\cdot): \br^d\times\Omega\rightarrow\br$ as a convex and smooth loss function of a rule $x$ for a sample data $\left\{\xi_i = \left(a_i, b_i\right)\right\}$, and define the corresponding expectation as $l(x) = \ex_{\xi}\left[l(x,\xi)\right]$.

The above formulation covers quite a few popular models arising from statistics and machine learning, such as Lasso \cite{Tibshirani-1996-Regression} obtained by setting $l(x,\xi_i)=\frac{1}{2}\left\|a_i^\top x - b_i\right\|^2$ and $r_1(x)=\lambda\left\|x\right\|_1$ and $r_2=0$, and linear SVM \cite{Cortes-1995-Support} obtained by letting $l(x,\xi_i)=\max\left(0, 1-b_i \cdot a_i^\top x\right)$ and $r_1(x)=(\lambda/2)\left\|x\right\|_2^2$ and $r_2=0$, where $\lambda>0$ is a parameter. More importantly, we can accommodate problem~\eqref{prob} with more complicated structures by imposing the non-trivial regularization term $r_2(Fx)$, such as fused Lasso \cite{Tibshirani-2005-Sparsity}, fused logistic regression and graph-guided regularized minimization \cite{Friedman-2009-Elements}.

The standard algorithm applied to solve problem~\eqref{prob} is proximal gradient descent \cite{Parikh-2014-Proximal}.  However, there are two main difficulties: 1) Computing the exact proximal gradient is intractable since the closed-form solution to the proximal mapping of $r_1(x)+r_2(Fx)$, or even single $r_2(Fx)$ is in usually unavailable; 2) the computational complexity of the full gradient $\nabla l(x)$ rapidly increases as the size of samples grows, and is hence prohibitively expensive for modern data-intensive applications.

A common way to suppress the former one is to introduce a new auxiliary variable $z$ with $z=Fx$ and reformulate problem~\eqref{prob} as a linearly constrained convex problem with respect to two variables $x$ and $z$ as follows:
\begin{eqnarray}\label{prob-1}
\min\limits_{x\in\XCal} & \ex_{\xi}\left[l(x,\xi)\right] + r_1(x) + r_2(z), \nonumber\\
 \st & \ Fx - z = 0.
\end{eqnarray}
Then one can resort to \textsf{Linearized Alternating Direction Method of Multipliers (LADMM)} \cite{Boyd-2011-Distributed, Yang-2013-Linearized}. Very recently, Lin \etal \cite{Lin-2015-Extragradient} have explored the efficiency of the extra-gradient descent \cite{Korpelevich-1976-Extragradient, Korpelevich-1983-Extragradient}, and further showed the hybrid Extra-Gradient ADM (\textsf{EGADM}) is very efficient on moderate size problems. However, these methods are computationally expensive due to the computation of the full gradient in each iteration.

To address the computational issue, several stochastic ADMM algorithms \cite{Ouyang-2013-Stochastic,Suzuki-2013-Dual, Azadi-2014-Towards, Zhao-2015-Adaptive, Zheng-2016-Stochastic} have been proposed. The idea is to draw a mini-batch of samples and then compute a noisy sub-gradient of $l(x) + r_1(x)$ on the mini-batch in each iteration. However, for problem~\eqref{prob} with non-smooth regularization (which is actually common in practice), these sub-gradient type alternating direction methods may be slow and unstable \cite{Duchi-2009-Efficient}.

In this work, we propose a \textsf{Stochastic Primal-Dual Proximal Extra-Gradient Descent (SPDPEG)}, which inherits the advantages of \textsf{EGADM} and stochastic methods. Basically, the proposed method computes two noisy gradients of $l$ at the $k$-th iteration by randomly drawing two data samples $\xi_1^{k+1}$ and $\xi_2^{k+1}$, and then performs extra-gradient descent along the noisy gradients. We demonstrate that the proposed algorithm is very efficient and stable in solving problem~\eqref{prob} with possible non-smooth terms at large scale.

\textbf{Our contribution:} We propose a novel \textsf{Stochastic Primal-Dual Proximal Extra-Gradient Descent (SPDPEG)}. \textsf{SPDPEG} is efficient in solving large-scale problems with composite and nonsmooth regularizations. We demonstrate its theoretical convergence for both convex and strongly convex objectives. For convex objectives, \textsf{SPDPEG} has the convergence rate of $O(1/\sqrt{t})$ in expectation with the uniformly average iterates. This convergence rate is known to be the best possible for minimizing general convex objective using first-order noisy oracle\cite{Agarwal-2012-Information}. When the objective to be optimized is strongly convex, \textsf{SPDPEG} converges at the rates of $O(\log(t)/t)$ and $O(1/t)$ in expectation with the uniformly and non-uniformly average iterates, respectively. This matches the convergence rate of stochastic ADMM with a significantly stronger robustness in terms of the numerical performance, as confirmed by encouraging experiments on fused logistic regression and graph-guided regularized minimization tasks.

\textbf{Related work:} The first line of related work are various stochastic alternating direction methods \cite{Ouyang-2013-Stochastic,Suzuki-2013-Dual,Azadi-2014-Towards,Gao-2014-Information,Zhong-2014-Fast,Suzuki-2014-Stochastic,Zhao-2015-Adaptive, Zheng-2016-Stochastic} developed to solve problem~\eqref{prob-1}. They fall into two camps: 1) to compute the noisy sub-gradient of $l+r_1$ on a mini-batch of data samples and perform sub-gradient descent \cite{Ouyang-2013-Stochastic, Suzuki-2013-Dual, Azadi-2014-Towards, Gao-2014-Information, Zhao-2015-Adaptive}; 2) to approximate problem~\eqref{prob-1} using the finite-sum loss and perform variance-reduced gradient descent or dual coordinate ascent \cite{Zhong-2014-Fast, Suzuki-2014-Stochastic, Zheng-2016-Stochastic}. 

For the first group of algorithms, drawing a noisy sub-gradient may lead to the unstable numerical performance, especially on large-scale problems. In the experimental section, we compare our algorithm against \textsf{SGADM} \cite{Gao-2014-Information} and demonstrate the significant improvement. 

For the second group of algorithms, it is not always feasible to use the finite-sum loss since we know nothing about the underlying distribution of data. In specific, Zhong and Kwok \cite{Zhong-2014-Fast} proposed a \textsf{Stochastic Averaged Gradient-based ADM (SAG-ADM)} whose iteration complexity is $O(1/t)$. However, \textsf{SAG-ADM} needs to store a few variables and incurs a very high memory cost. Suzuki \cite{Suzuki-2014-Stochastic} proposed a linearly convergent \textsf{Stochastic Dual Coordinate Ascent ADM (SDCA-ADM)}. However, a stronger assumption on $r_1$ and $r_2$ such as strong convexity and smoothness is imposed. Zheng and Kwok \cite{Zheng-2016-Stochastic} proposed a \textsf{Stochastic Variance-Reduce Gradient-based ADM (SVRG-ADM)} for convex and non-convex problems. However, \textsf{SVRG-ADM} only focuses on the finite-sum problem. In contrast, our \textsf{SPDPEG} approach can be applied to solve problem~\eqref{prob-1} in very general form. 

Very recently, a stochastic variant of hybrid gradient method, namely \textsf{SPDHG} \cite{Qiao-2016-Stochastic}, has been proposed to solve a class of compositely regularized minimization problems with very \textbf{special} regularization. In specific, $r_1\equiv 0$ and $r_2(x) = \max\limits_{y \in \YCal} \left\langle y,x\right\rangle$ (See Assumption 3 in \cite{Qiao-2016-Stochastic}). However, such assumption is very strong and does not hold for many compositely regularized minimization problems. This motivates us to consider problem~\eqref{prob-1} and develop \textsf{SPDPEG} approach.

The second line of related works is various extra-gradient methods. This idea is not new and originally proposed by Korpelevich for solving saddle-point problems and variational inequalities \cite{Korpelevich-1976-Extragradient, Korpelevich-1983-Extragradient}. The convergence and iteration complexity of extra-gradient methods are established in  \cite{Noor-2003-New} and \cite{Nemirovski-2004-Prox} respectively. There are also some variants of extra-gradient methods. Solodov and Svaiter proposed a hybrid proximal extra-gradient method \cite{Solodov-1999-Hybrid}, whose iteration complexity is established by Monteiro and Svaiter in \cite{Monteiro-2010-Complexity, Monteiro-2011-Complexity, Monteiro-2013-Iteration}. Bonettini and Ruggiero studied a generalized extragradient method for total variation based image restoration problem \cite{Bonettini-2011-Alternating}. To the best of our knowledge, this is the first time that a stochastic primal-dual variant of extra-gradient type methods is introduced to solve problem~\eqref{prob}.
\section{Problem Set-Up and Methods}
We make the following assumptions that are common in optimization literature and usually hold in practice throughout the paper:
\begin{assumption}\label{assumption_1}
The optimal set of problem~\eqref{prob} is nonempty.
\end{assumption}
\begin{assumption}\label{assumption_2}
$l(\cdot)$ is continuously differentiable with Lipschitz continuous gradient. That is, there exists a constant $L>0$ such that
\begin{displaymath}
\left\|\nabla l(x_1) - \nabla l(x_2)\right\|\leq L\left\| x_1 - x_2\right\|, \forall x_1, x_2\in\XCal.
\end{displaymath}
\end{assumption}
Assumption~\ref{assumption_2} holds for many problems in machine learning. For example, the following least squares and logistic functions are two standard ones:
\begin{displaymath}
l(x,\xi_i) = \frac{1}{2}\left\| a_i^\top x - b_i\right\|^2 \mbox{ or } \ l(x,\xi_i) = \log\left( 1+\exp\left(-b_i \cdot a_i^\top x\right) \right),
\end{displaymath}
where $\xi_i = \left(a_i,b_i\right)$ is a single data sample.

\begin{assumption}\label{assumption_3}
The regularization functions $r_1$ and $r_2$ are both continuous but possibly non-smooth; the associated proximal mapping for each individual regularization admits a closed-form solution, \textit{i.e.},
\begin{equation}
\prox_{r_i}(x) = \argmin_y \ r_i(y) + \frac{1}{2}\left\| y-x\right\|_2^2
\end{equation}
can be calculated in a closed form for $i=1,2$.
\end{assumption}

\begin{remark}
We remark that Assumption \ref{assumption_3} is reasonable for a class of optimization problems regularized by $\ell_1$-norm or nuclear norm, such as fused Lasso, fused logistic regression, and graph-guided regularized minimization problems. The proximal mapping of $\ell_1$-norm can be computed as follows:
\begin{eqnarray*}
\left[\prox_{\left\|\cdot\right\|_1}(x)\right]_i & = & \argmin_y \ \left\|y\right\|_1 + \frac{1}{2}\left\| y-x_i\right\|_2^2 \\
& = & \left\{ \begin{array}{cc}
\sign(x_i)(|x_i|-1) & |x_i|>1, \\
0 & |x_i|\leq 1.
\end{array}\right.
\end{eqnarray*}
We clarify that the proximal mapping of $r(x)$ and that of $r(Fx)$ are totally different and have different properties. For example, the proximal mapping of $\|x\|_1$ admits a closed-form solution but the proximal mapping of $\|Fx\|_1$ does not admit in general when $F$ is non-diagonal. We only assume that the proximal mapping of $r(x)$ admits a closed-form solution in Assumption \ref{assumption_3} but expect to address the case of $r(Fx)$ whose proximal mapping does not admit a closed-form solution in general.
\end{remark}

\begin{assumption}\label{assumption_4}
The gradient of the objective function $l(x)$ is easy to estimate. Any stochastic gradient estimation $\nabla l(\cdot,\xi)$ for $\nabla l(\cdot)$ at $x$ satisfies
\begin{displaymath}
\ex_{\xi}\left[\nabla l(x,\xi)\right] = \nabla l(x),
\end{displaymath}
and
\begin{displaymath}
\ex_{\xi}\left[\left\| \nabla l(x,\xi) - \nabla l(x)\right\|^2\right] \leq \sigma^2,
\end{displaymath}
where $\sigma>0$ is a constant number.
\end{assumption}

\begin{assumption}\label{assumption_5}
$l(\cdot)$ is \textbf{$\mu$-strongly convex} at $x$. In other words, there exists a constant $\mu>0$ such that
\begin{displaymath}
l(y) - l(x) - \left(y-x\right)^\top\nabla l(x) \geq \frac{\mu}{2}\left\| y-x\right\|^2, \forall y\in\XCal.
\end{displaymath}
\end{assumption}
We remark that our algorithm works even without Assumption \ref{assumption_5}. However, the lower iteration complexity will be obtained with Assumption \ref{assumption_5}. 

\begin{algorithm}[t]
\caption{Stochastic Primal-Dual Proximal ExtraGradient  (SPDPEG)}\label{ALG:SPDPEG}
\begin{algorithmic}
\STATE \textbf{Initialize:} $x^0$, $z^0$, and $\lambda^0$.
\FOR {$k = 0,1,2,\cdots$}
\STATE choose two data samples $\xi_1^{k+1}$ and $\xi_2^{k+1}$ randomly;
\STATE update $z^{k+1}$ according to Eq.~\eqref{Eq:Update_Z};
\STATE update $\bar{x}^{k+1}$ according to Eq.~\eqref{Eq:Update_barX};
\STATE update $\bar{\lambda}^{k+1}$ according to Eq.~\eqref{Eq:Update_barLambda};
\STATE update $x^{k+1}$ according to Eq.~\eqref{Eq:Update_X};
\STATE update $\lambda^{k+1}$ according to Eq.~\eqref{Eq:Update_Lambda};
\ENDFOR
\STATE \textbf{Output:} $\tilde{z}^t = \sum\limits_{k=0}^t \alpha^{k+1} z^{k+1}$, $\tilde{x}^t = \sum\limits_{k=0}^t \alpha^{k+1} \bar{x}^{k+1}$, and $\tilde{\lambda}^t = \sum\limits_{k=0}^t \alpha^{k+1} \bar{\lambda}^{k+1}$.
\end{algorithmic}
\end{algorithm}

We introduce the \textsf{Stochastic Primal-Dual Proximal ExtraGradient (SPDPEG)} method, and further discuss the choice of step-size. We define the augmented Lagrangian function for problem~\eqref{prob-1} as
\[ \LCal_\gamma\left(z,x,\lambda\right) = r_2(z) + r_1(x) + \phi\left(z,x,\lambda\right) + \frac{\gamma}{2}\left\|Fx - z\right\|^2, \]
where $\lambda\in\br^p$ is the dual variable associated with $Fx=z$. $\phi$ is defined as
\[ \phi\left(z,x,\lambda\right) = l(x) - \left\langle\lambda, Fx - z\right\rangle. \]
The \textsf{SPDPEG} algorithm is based on the \textbf{primal-dual update scheme} where $(z,x)$ is primal variable and $\lambda$ is a dual variable, and can be seen as an inexact augmented Lagrangian method. The details are presented in Algorithm \ref{ALG:SPDPEG}. 

We provide details on following four important issues: how to solve the primal and dual sub-problems easily, how to apply the noisy gradient and perform extra-gradient descent, how to choose step-size, and how to determine the weights for the non-uniformly average iterates.
\begin{enumerate}
\item\textbf{Update for $z$:} The first sub-problem in Algorithm \ref{ALG:SPDPEG} is to minimize the augmented Lagrangian function $\LCal_\gamma$ with respect to $z$, \textit{i.e.},
\begin{equation}\label{Eq:Update_Z}
z^{k+1} := \displaystyle \argmin_z \ \LCal_\gamma(z, x^k; \lambda^k),
\end{equation}
which is equivalent to computing the proximal mapping of $r_2$ and hence admits a closed-form solution from Assumption \ref{assumption_3}.

\item\textbf{Stochastic Gradient:} According to Assumption \ref{assumption_4}, $\phi$ is known to be easy for gradient estimation with respect to $x$, and the stochastic gradient estimation $G\left(z,x,\lambda;\xi\right)$ is defined as
\[ G\left(z,x,\lambda;\xi\right) = \nabla l(x,\xi) - F^\top\lambda.\]
To update $x$, the SPDPEG algorithm takes a proximal extra-gradient step using a stochastic gradient estimation $G\left(z,x,\lambda;\xi\right)$ and different step-sizes, \textit{i.e.},
\begin{align}
\bar{x}^{k+1} & := \prox_{c^{k+1} r_1}\left( x^k - c^{k+1}G\left(y^{k+1}, x^k, \lambda^k; \xi_1^{k+1}\right)\right), \label{Eq:Update_barX} \\
\bar{\lambda}^{k+1} & := \lambda^k -\gamma\left( Fx^k - z^{k+1} \right),
\label{Eq:Update_barLambda}
\end{align}
and
\begin{align}
x^{k+1} & := \prox_{c^{k+1} r_1}\left(x^k - c^{k+1}G\left(y^{k+1}, \bar{x}^{k+1},\bar{\lambda}^{k+1}; \xi_2^{k+1}\right)\right), \label{Eq:Update_X} \\
\lambda^{k+1} & := \lambda^k -\gamma\left(F\bar{x}^{k+1} - z^{k+1} \right). \label{Eq:Update_Lambda}
\end{align}

\item\textbf{Step-Size $c^{k+1}$:} The choice of step-size $c^{k+1}$ depends on whether the objective function is strongly convex or not. The rate of convergence varies with respect to different step-size rules. Moreover, a sequence of vanishing step-sizes is necessary since we do not adopt any technique of variance reduction in the proposed algorithm.

\item\textbf{Non-Uniformly Average Iterates:}
\cite{Azadi-2014-Towards} showed that non-uniform average iterates generated by stochastic algorithms converge with fewer iterations. Inspired by this work, through non-uniformly averaging the iterates of the SPDPEG algorithm and adopting a slightly modified step-size, we manage to establish an accelerated convergence rate of $O(1/t)$ in expectation.
\end{enumerate}
\section{Main Result}
In this section, we present the main result in this paper. For general convex objectives, the uniformly average iterates generated by the \textsf{SPDPEG} algorithm converge in expectation with $O(1/\sqrt{t})$ rate. While for strongly convex objectives, the uniformly and non-uniformly average iterates generated converge in expectation with $O(\log(t)/t)$ and $O(1/t)$ rates, respectively. \textbf{The computational complexity are $O(d/\sqrt{t})$, $O(d\log(t)/t)$ and $O(d/t)$ since the per-iteration complexity is the computational cost of the noisy gradient on $\xi_1$ and $\xi_2$ and the proximal mapping}, where $d$ is the dimension of decision variable. The main theoretic results with respect to different settings are summarized as follows:
\begin{enumerate}
\item Assuming that $l$ is a general convex objective function, the step-size is $c^{k+1} = \frac{1}{\sqrt{k+1} + \tilde{L}}$, and the weight of the iterates is $\alpha^{k+1} = \frac{1}{t+1}$, the proposed SPDPEG algorithm converges with the $O(1/\sqrt{t})$ rate in expectation.
\item Assuming that $l$ is a $\mu$-strongly convex objective function, the step-size is $c^{k+1} = \frac{2}{\mu\left(k+1\right) + 2\tilde{L}}$, and the weight of the iterates is $\alpha^{k+1} = \frac{1}{t+1}$, the proposed SPDPEG algorithm converges with the $O(\log(t)/t)$ rate in expectation.
\item Assuming that $l$ is a $\mu$-strongly convex objective function, the step-size is $c^{k+1} = \frac{4}{\mu\left(k+2\right) + 4\tilde{L}}$, the weight of the iterates is $\alpha^{k+1} = \frac{2(k+3)}{(t+1)(t+6)}$, and the dual variables are bounded by $D_\lambda>0$ (this assumption is standard and also adopted in \cite{Azadi-2014-Towards}), the proposed SPDPEG algorithm converges with the $O(1/t)$ rate in expectation.
\end{enumerate}
In the above, $\tilde{L}$ is defined as
\begin{displaymath}
\tilde{L} = \max\left\{8\gamma \sigma_{\max}(F^\top F) + \mu, \sqrt{8L^2 + \gamma\sigma_{\max}(F^\top F)} + \mu\right\},
\end{displaymath}
where $\sigma_{\max}(F^\top F)$ denotes the largest eigenvalue of $F^\top F$, and $\mu=0$  when $l$ is a general convex objective function.

We present the main theoretic result for uniformly average iterates under general convex objective functions in the following theorem.
\begin{restatable}{theorem}{pTheoremConvexAverage}
\label{Theorem-Convex-Average}
Consider the SPDPEG algorithm with uniformly average iterates. For any
optimal solution $\left(z^*,x^*\right)$, it holds that
\begin{align}
\left| \ex[l(\tilde{x}^t)] + \ex[r_1(\tilde{x}^t)] + \ex[r_2(\tilde{z}^t)] - l(x^*) - r_1(x^*) - r_2(z^*)\right| & = O(1/\sqrt{t}), \label{result-convex-average-1} \\
\left\| F\ex[\tilde{x}^t] - \ex[\tilde{z}^t] \right\| & = O(1/\sqrt{t}). \label{result-convex-average-2}
\end{align}
Note that this implies that the SPDPEG algorithm converges in expectation with the $O(1/\sqrt{t})$ rate in terms of both the objective error and constraint violation.
\end{restatable}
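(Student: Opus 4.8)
The plan is to derive a single per-iteration variational inequality for the extra-gradient iterates $(z^{k+1},\bar x^{k+1},\bar\lambda^{k+1})$, sum it with the prescribed weights, take expectations, and convert the resulting bound on the Lagrangian gap into~\eqref{result-convex-average-1}--\eqref{result-convex-average-2}. Throughout write $\Phi(z,x;\lambda)=r_2(z)+r_1(x)+l(x)-\langle\lambda,Fx-z\rangle$ for the ordinary Lagrangian of~\eqref{prob-1}, fix a primal-dual optimal pair $(z^{*},x^{*},\lambda^{*})$ (its existence being the standard consequence of Assumption~\ref{assumption_1} for the linearly constrained reformulation), set $\Phi^{*}=l(x^{*})+r_1(x^{*})+r_2(z^{*})$, and recall $Fx^{*}=z^{*}$. \textbf{Step 1 (per-iteration inequalities).} First I would convert each update into an inequality against an arbitrary comparison point: the optimality of~\eqref{Eq:Update_Z} combined with~\eqref{Eq:Update_barLambda} gives $-\bar\lambda^{k+1}\in\partial r_2(z^{k+1})$, hence $r_2(z^{k+1})-r_2(z)\le\langle\bar\lambda^{k+1},z-z^{k+1}\rangle$ for every $z$; the prox-steps~\eqref{Eq:Update_barX} and~\eqref{Eq:Update_X}, multiplied by $c^{k+1}$ and used with convexity of $r_1$, give two inequalities comparing $\bar x^{k+1}$ and $x^{k+1}$ against an arbitrary $x$, in which the stochastic gradients $G_1:=\nabla l(x^{k},\xi^{k+1}_1)-F^{\top}\lambda^{k}$ and $G_2:=\nabla l(\bar x^{k+1},\xi^{k+1}_2)-F^{\top}\bar\lambda^{k+1}$ enter only linearly. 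The structural point is that~\eqref{Eq:Update_barLambda}--\eqref{Eq:Update_Lambda} form an extra-gradient predictor/corrector pair in $\lambda$ (predictor $\bar\lambda^{k+1}$ uses $Fx^{k}$, corrector $\lambda^{k+1}$ uses $F\bar x^{k+1}$), mirroring $(\bar x^{k+1},x^{k+1})$.

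\textbf{Step 2 (the telescoping estimate --- the crux).} I would add the ``extra'' and ``main'' $x$-inequalities, expand each inner product with $\langle a-b,a-c\rangle=\tfrac12\|a-b\|^2+\tfrac12\|a-c\|^2-\tfrac12\|b-c\|^2$ so the two copies of $\tfrac12\|x^{k+1}-x^{k}\|^2$ cancel, and apply Young's inequality to the leftover term $c^{k+1}\langle G_2-G_1,x^{k+1}-\bar x^{k+1}\rangle$ to absorb $\tfrac12\|x^{k+1}-\bar x^{k+1}\|^2$, leaving $\tfrac{(c^{k+1})^2}{2}\|G_2-G_1\|^2$ and a good residual $-\tfrac12\|\bar x^{k+1}-x^{k}\|^2$. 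Treating the dual extra-gradient pair the same way produces the telescope $\tfrac{c^{k+1}}{2\gamma}\big(\|\lambda^{k}-\lambda\|^2-\|\lambda^{k+1}-\lambda\|^2\big)$, a residual $-\tfrac{c^{k+1}}{2\gamma}\|\lambda^{k+1}-\lambda^{k}\|^2$, and a cross term $c^{k+1}\gamma\langle F(\bar x^{k+1}-x^{k}),F\bar x^{k+1}-z^{k+1}\rangle$ that Young's inequality again controls (using $F\bar x^{k+1}-z^{k+1}=\gamma^{-1}(\lambda^{k}-\lambda^{k+1})$). Convexity of $l$ (Assumption~\ref{assumption_2}, plus Assumption~\ref{assumption_5} when available, which contributes $\tfrac{c^{k+1}\mu}{2}\|\bar x^{k+1}-x\|^2$ and explains the ``$+\mu$'' in $\tilde L$) replaces $\langle\nabla l(\bar x^{k+1}),\bar x^{k+1}-x\rangle$ by $l(\bar x^{k+1})-l(x)$, and writing $\nabla l(\bar x^{k+1},\xi_2^{k+1})=\nabla l(\bar x^{k+1})+e^{k+1}$ with $e^{k+1}$ the conditionally zero-mean gradient noise, the pieces recombine --- using $-\bar\lambda^{k+1}\in\partial r_2(z^{k+1})$, $Fx^{k}-z^{k+1}=\gamma^{-1}(\lambda^{k}-\bar\lambda^{k+1})$ and $Fx^{*}=z^{*}$ --- into, for every $\lambda$,
\begin{align*}
c^{k+1}\big[\Phi(z^{k+1},\bar x^{k+1};\lambda)-\Phi^{*}\big]
&\le \tfrac12\|x^{k}-x^{*}\|^2-\tfrac12\|x^{k+1}-x^{*}\|^2+\tfrac{c^{k+1}}{2\gamma}\big(\|\lambda^{k}-\lambda\|^2-\|\lambda^{k+1}-\lambda\|^2\big)\\
&\quad{}+O\big((c^{k+1})^2\sigma^2\big)-c^{k+1}\langle e^{k+1},\bar x^{k+1}-x^{*}\rangle .
\end{align*}
I expect this to be the main obstacle: the step size $c^{k+1}=1/(\sqrt{k+1}+\tilde L)$ with $\tilde L$ as stated is precisely what makes the residuals $-\tfrac12\|\bar x^{k+1}-x^{k}\|^2$ and $-\tfrac{c^{k+1}}{2\gamma}\|\lambda^{k+1}-\lambda^{k}\|^2$ dominate \emph{all} the quadratic debris at once --- the Lipschitz part $O((c^{k+1})^2L^2)\|\bar x^{k+1}-x^{k}\|^2$ of $\|G_2-G_1\|^2$, the $\sigma_{\max}(F^{\top}F)$ coupling coming from $F^{\top}(\bar\lambda^{k+1}-\lambda^{k})$ and from the dual cross term, and the passage between $\|\bar\lambda^{k+1}-\lambda^{k}\|$ and $\|\lambda^{k+1}-\lambda^{k}\|$ --- and carrying out this bookkeeping is exactly what produces the two branches $8\gamma\sigma_{\max}(F^{\top}F)+\mu$ and $\sqrt{8L^2+\gamma\sigma_{\max}(F^{\top}F)}+\mu$ of $\tilde L$.

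\textbf{Step 3 (sum, expectation, average).} Divide the display by $c^{k+1}$ and sum over $k=0,\dots,t$. Since $1/c^{k+1}=\sqrt{k+1}+\tilde L$ is increasing, summation by parts together with $\|x^{k}-x^{*}\|\le D_x$ gives $\sum_{k}\tfrac{1}{2c^{k+1}}\big(\|x^{k}-x^{*}\|^2-\|x^{k+1}-x^{*}\|^2\big)\le\tfrac{D_x^2}{2c^{t+1}}=O(\sqrt t)$; the $\lambda$-telescope has the \emph{constant} weight $\tfrac1{2\gamma}$ and collapses to $\tfrac1{2\gamma}\|\lambda^{0}-\lambda\|^2$ (so no boundedness of the dual iterates is needed here); and the noise contributes $O\big(\sigma^2\sum_{k}c^{k+1}\big)=O(\sigma^2\sqrt t)$. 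Taking expectations deletes the $\langle e^{k+1},\bar x^{k+1}-x^{*}\rangle$ terms, since conditioned on the history up to and including $\xi_1^{k+1}$ the iterate $\bar x^{k+1}$ is measurable, $\ex[e^{k+1}\mid\cdot]=0$ by Assumption~\ref{assumption_4}, and $x^{*}$ is deterministic. Then convexity of $r_1,r_2,l$ and linearity of $\lambda\mapsto-\langle\lambda,Fx-z\rangle$ let Jensen's inequality pull the uniform average $\alpha^{k+1}=1/(t+1)$ inside, yielding, for every $\lambda$,
\[
\ex\!\big[l(\tilde x^t)+r_1(\tilde x^t)+r_2(\tilde z^t)\big]-\Phi^{*}-\big\langle\lambda,\,F\ex[\tilde x^t]-\ex[\tilde z^t]\big\rangle\;\le\;O(1/\sqrt t).
\]

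\textbf{Step 4 (objective error and feasibility).} Putting $\lambda=0$ bounds the objective error above by $O(1/\sqrt t)$. Taking the supremum over $\|\lambda\|\le\rho$ (legitimate because the right-hand side above stays bounded there, uniformly in such $\lambda$) gives $\ex[\,\cdot\,]-\Phi^{*}+\rho\,\|F\ex[\tilde x^t]-\ex[\tilde z^t]\|\le O(1/\sqrt t)$, while the saddle-point inequality $\Phi(\tilde z^t,\tilde x^t;\lambda^{*})\ge\Phi^{*}$ gives $\ex[l(\tilde x^t)+r_1(\tilde x^t)+r_2(\tilde z^t)]-\Phi^{*}\ge-\|\lambda^{*}\|\,\|F\ex[\tilde x^t]-\ex[\tilde z^t]\|$ (note $\|F\ex[\tilde x^t]-\ex[\tilde z^t]\|=\|\ex[F\tilde x^t-\tilde z^t]\|$, so no extra Jensen step is needed). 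Choosing $\rho=\|\lambda^{*}\|+1$ and combining these two yields $\|F\ex[\tilde x^t]-\ex[\tilde z^t]\|=O(1/\sqrt t)$, which is~\eqref{result-convex-average-2}; substituting it back into the two inequalities gives $\big|\ex[l(\tilde x^t)+r_1(\tilde x^t)+r_2(\tilde z^t)]-\Phi^{*}\big|=O(1/\sqrt t)$, which is~\eqref{result-convex-average-1}.
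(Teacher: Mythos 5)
Your proposal is correct and follows essentially the same route as the paper: Steps 1--2 reproduce the paper's unified per-iteration inequality (Lemma \ref{Lemma-Unified}) and its specialization under the convex step-size rule (Lemma \ref{Lemma-Convex-Average}), and Steps 3--4 match the paper's uniform averaging with Jensen, the telescoping bounds via $D_x$ and the constant-weight dual telescope, and the $\rho=\left\|\lambda^*\right\|+1$ argument with the choice $\lambda = -\rho\left(F\ex[\tilde{x}^t]-\ex[\tilde{z}^t]\right)/\left\|F\ex[\tilde{x}^t]-\ex[\tilde{z}^t]\right\|$. The only cosmetic difference is that you obtain the lower bound on the objective error directly from the saddle-point inequality at $\lambda^*$, while the paper routes it through the perturbation function $v(\eta)$ with $\lambda^*\in\partial v(0)$; the two arguments yield the same bound $-\left\|\lambda^*\right\|\left\|F\ex[\tilde{x}^t]-\ex[\tilde{z}^t]\right\|$.
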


We present the main theoretic result for uniformly average iterates under a strongly convex objective function in the following theorem.
\begin{restatable}{theorem}{pTheoremStronglyConvexAverage}
\label{Theorem-Strongly-Convex-Average}
Consider the SPDPEG algorithm with uniformly average iterates. For any optimal solution $\left(z^*, x^*\right)$, it holds that
\begin{align}
\left| \ex[l(\tilde{x}^t)] + \ex[r_1(\tilde{x}^t)] + \ex[r_2(\tilde{z}^t)] - l(x^*) - r_1(x^*) - r_2(z^*)\right| & = O(\log(t)/t), \label{result-strongly-convex-average-1} \\
\left\| F\ex[\tilde{x}^t] - \ex[\tilde{z}^t] \right\| & = O(\log(t)/t). \label{result-strongly-convex-average-2}
\end{align}
Note that this implies that the SPDPEG algorithm converges in expectation with the $O(\log(t)/t)$ rate in terms of both the objective error and constraint violation.
\end{restatable}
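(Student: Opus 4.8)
The plan is to analyze \textsf{SPDPEG} as a stochastic inexact scheme for the saddle point of the Lagrangian $\LCal_0(z,x,\lambda)=l(x)+r_1(x)+r_2(z)-\langle\lambda,Fx-z\rangle$ of problem~\eqref{prob-1}, and to control, for every deterministic reference triple $(z,x,\lambda)$, the one-step gap
\[
Q^{k+1}(z,x,\lambda)=\bigl[l(\bar x^{k+1})+r_1(\bar x^{k+1})+r_2(z^{k+1})\bigr]-\bigl[l(x)+r_1(x)+r_2(z)\bigr]-\langle\lambda,F\bar x^{k+1}-z^{k+1}\rangle+\langle\bar\lambda^{k+1},Fx-z\rangle .
\]
The argument runs in parallel with that of Theorem~\ref{Theorem-Convex-Average}; the only additional ingredient is that Assumption~\ref{assumption_5}, applied at $\bar x^{k+1}$, contributes a term $\tfrac{\mu}{2}\|\bar x^{k+1}-x\|^2$ on the favourable side of the per-iteration estimate, which lets the prox coefficient $1/c^{k+1}$ grow linearly in $k$ while the quadratics still telescope. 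The residual stochastic error is a weighted harmonic sum $\sum_{k\le t}c^{k+1}\sigma^2=\Theta(\sigma^2\mu^{-1}\log t)$, and it is precisely this sum---not the telescoping part---that produces the $\log t$ in the rate.

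First I would write the first-order optimality conditions of the three subproblems. Exact minimization in $z$ in \eqref{Eq:Update_Z} together with \eqref{Eq:Update_barLambda} yields $-\bar\lambda^{k+1}\in\partial r_2(z^{k+1})$, hence $r_2(z)-r_2(z^{k+1})\ge\langle-\bar\lambda^{k+1},z-z^{k+1}\rangle$; the prox steps \eqref{Eq:Update_barX} and \eqref{Eq:Update_X} yield $\tfrac{1}{c^{k+1}}(x^k-\bar x^{k+1})-G(z^{k+1},x^k,\lambda^k;\xi_1^{k+1})\in\partial r_1(\bar x^{k+1})$ and $\tfrac{1}{c^{k+1}}(x^k-x^{k+1})-G(z^{k+1},\bar x^{k+1},\bar\lambda^{k+1};\xi_2^{k+1})\in\partial r_1(x^{k+1})$. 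I would then combine these subgradient inequalities with convexity of $l$ sharpened by Assumption~\ref{assumption_5}, apply the three-point identity $\langle a-b,a-c\rangle=\tfrac{1}{2}(\|a-b\|^2+\|a-c\|^2-\|b-c\|^2)$ to the $x$-terms anchored at $x^k$, and rewrite the coupling through $\lambda^{k+1}-\bar\lambda^{k+1}=-\gamma F(\bar x^{k+1}-x^k)$ and $\bar\lambda^{k+1}-\lambda^{k}=\gamma(z^{k+1}-Fx^{k})$. The outcome should be a bound of the shape
\[
Q^{k+1}(z,x,\lambda)\le\tfrac{1}{2c^{k+1}}\bigl(\|x^k-x\|^2-\|x^{k+1}-x\|^2\bigr)-\tfrac{\mu}{2}\|\bar x^{k+1}-x\|^2+\tfrac{1}{2\gamma}\bigl(\|\lambda^k-\lambda\|^2-\|\lambda^{k+1}-\lambda\|^2\bigr)+E^{k+1}+N^{k+1},
\]
where $E^{k+1}$ collects the extra-gradient mismatch---gradient differences $\langle\nabla l(\bar x^{k+1})-\nabla l(x^k),\cdot\rangle$ bounded via Assumption~\ref{assumption_2}, dual mismatches $\langle F^\T(\bar\lambda^{k+1}-\lambda^{k}),\cdot\rangle$ and $\|\lambda^{k+1}-\bar\lambda^{k+1}\|^2$ bounded via $\sigma_{\max}(F^\T F)$, and the leftover augmented-penalty term $\tfrac{\gamma}{2}\|F\bar x^{k+1}-z^{k+1}\|^2$---while $N^{k+1}$ collects the stochastic pieces $\langle\nabla l(x^k,\xi_1^{k+1})-\nabla l(x^k),\cdot\rangle$ and $\langle\nabla l(\bar x^{k+1},\xi_2^{k+1})-\nabla l(\bar x^{k+1}),\cdot\rangle$ together with their Young-inequality squares.

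Next I would take conditional expectations given the history $\FCal_k$. Under Assumption~\ref{assumption_4} the linear-in-noise parts of $N^{k+1}$ vanish: the $\xi_1^{k+1}$-noise is paired only with $\FCal_k$-measurable or deterministic quantities (notably the reference $x$), and the $\xi_2^{k+1}$-noise is paired with $\bar x^{k+1}$, which is conditionally independent of $\xi_2^{k+1}$ given $(\FCal_k,\xi_1^{k+1})$; the surviving squared-noise terms are bounded by a constant multiple of $c^{k+1}\sigma^2$ after splitting them against the $\tfrac{1}{2c^{k+1}}\|\cdot\|^2$ terms. The threshold $\tilde L=\max\{8\gamma\sigma_{\max}(F^\T F)+\mu,\,\sqrt{8L^2+\gamma\sigma_{\max}(F^\T F)}+\mu\}$ is chosen so that, with $c^{k+1}=\tfrac{2}{\mu(k+1)+2\tilde L}$, the entire deterministic mismatch $E^{k+1}$ is absorbed into the telescoping quadratics, the $\bar x$-subproblem quadratic $\tfrac{1}{2c^{k+1}}\|\bar x^{k+1}-x^k\|^2$, and the strong-convexity term. \textbf{Verifying these constant chases is the main obstacle and the step I would carry out most carefully:} the step-size must be simultaneously small enough for extra-gradient stability (the $L$-branch and the $\gamma\sigma_{\max}(F^\T F)$-branch of $\tilde L$) and large enough that $1/c^{k+1}$ still increases fast enough to keep the improved rate.

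Finally I would sum the conditional bound over $k=0,\dots,t$ with the uniform weight $\alpha^{k+1}=\tfrac{1}{t+1}$. Since $\tfrac{1}{2c^{k+1}}-\tfrac{\mu}{2}=\tfrac{\mu(k-1)}{4}+\tfrac{\tilde L}{2}\le\tfrac{1}{2c^{k}}$ (with $c^{0}$ read off the same formula), the $x$-quadratics telescope against the strong-convexity term, the $\lambda$-quadratics telescope directly with $\sum_k\alpha^{k+1}\tfrac{1}{2\gamma}(\|\lambda^k-\lambda\|^2-\|\lambda^{k+1}-\lambda\|^2)\le\tfrac{\|\lambda^0-\lambda\|^2}{2\gamma(t+1)}$, and using $\|x^0-x\|\le D_x$ together with $\sum_{k=0}^{t}c^{k+1}=O(\mu^{-1}\log t)$ this leaves $\tfrac{1}{t+1}\sum_k\ex[Q^{k+1}]\le O\!\bigl(\tfrac{\|\lambda^0-\lambda\|^2+1}{t}+\tfrac{\sigma^2\log t}{\mu t}\bigr)$. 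Convexity of $l+r_1$ and of $r_2$, linearity of $F$, the identity $Fx^*=z^*$, and Jensen's inequality then give
\[
\ex\bigl[l(\tilde x^t)+r_1(\tilde x^t)+r_2(\tilde z^t)\bigr]-l(x^*)-r_1(x^*)-r_2(z^*)-\bigl\langle\lambda,\,F\ex[\tilde x^t]-\ex[\tilde z^t]\bigr\rangle\le O\!\bigl(\log t/t\bigr)
\]
for every deterministic $\lambda$. Taking $\lambda=0$ yields the upper bound in \eqref{result-strongly-convex-average-1}; taking $\lambda=-(\|\lambda^*\|+1)\,\tfrac{F\ex[\tilde x^t]-\ex[\tilde z^t]}{\|F\ex[\tilde x^t]-\ex[\tilde z^t]\|}$ for a fixed dual optimal $\lambda^*$ (which exists because the constraint is affine and $\XCal$ is compact), and combining with the saddle inequality $\ex[l(\tilde x^t)+r_1(\tilde x^t)+r_2(\tilde z^t)]-l(x^*)-r_1(x^*)-r_2(z^*)\ge\langle\lambda^*,F\ex[\tilde x^t]-\ex[\tilde z^t]\rangle$, isolates the constraint-violation bound \eqref{result-strongly-convex-average-2} and then also the matching lower bound in \eqref{result-strongly-convex-average-1}.
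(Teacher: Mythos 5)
Your proposal is correct and follows essentially the same route as the paper's proof: the per-iteration gap bound derived from the three optimality conditions (the paper's Lemma~\ref{Lemma-Unified}), strong convexity of $l$ invoked at $\bar{x}^{k+1}$ together with the step size $c^{k+1}=\frac{2}{\mu(k+1)+2\tilde{L}}$ so that the $x$-quadratics telescope with linearly growing coefficients, the harmonic sum $\sum_{k\le t}c^{k+1}\sigma^2=O(\sigma^2\mu^{-1}\log t)$ as the sole source of the $\log t$ factor, and the same $\rho=\|\lambda^*\|+1$ dual/perturbation argument to separate the objective error from the constraint violation. The only bookkeeping detail to tighten is the telescoping step: the strong-convexity quadratic is anchored at $\bar{x}^{k+1}$ rather than at $x^k$ or $x^{k+1}$, so instead of your inequality $\frac{1}{2c^{k+1}}-\frac{\mu}{2}\le\frac{1}{2c^{k}}$ one proceeds as in Lemma~\ref{Lemma-Strongly-Convex-Average}, using $c^{k+1}<1/\mu$ to turn the retained term $\frac{1}{2c^{k+1}}\|x^{k+1}-\bar{x}^{k+1}\|^2$ into $\frac{\mu}{2}\|x^{k+1}-\bar{x}^{k+1}\|^2$ and then $\|x-\bar{x}^{k+1}\|^2+\|\bar{x}^{k+1}-x^{k+1}\|^2\ge\frac{1}{2}\|x-x^{k+1}\|^2$, which is why the telescoping coefficients appear as $\frac{\mu(k+2)+2\tilde{L}}{4}$ (gaining only $\mu/4$ per step) and still yield the claimed $O(\log(t)/t)$ rate.
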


We present the main theoretic result for non-uniformly average iterates under a strongly convex objective function in the following theorem.
\begin{restatable}{theorem}{pTheoremStronglyConvexNonaverage}
\label{Theorem-Strongly-Convex-Nonaverage}
Consider the SPDPEG algorithm with non-uniformly average iterates. For any
optimal solution $\left(z^*, x^*\right)$, it holds that
\begin{align}
\left| \ex[l(\tilde{x}^t)] + \ex[r_1(\tilde{x}^t)] + \ex[r_2(\tilde{z}^t)] - l(x^*) - r_1(x^*) - r_2(z^*)\right| & = O(1/t), \label{result-strongly-convex-nonaverage-1} \\
\left\| F\ex[\tilde{x}^t] - \ex[\tilde{z}^t] \right\| & = O(1/t). \label{result-strongly-convex-nonaverage-2}
\end{align}
Note that this implies that the SPDPEG algorithm converges in expectation with the $O(1/t)$ rate in terms of both the objective error and constraint violation.
\end{restatable}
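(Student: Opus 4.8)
The plan is to run the standard primal--dual analysis of stochastic extra-gradient methods, mirroring the argument behind Theorem~\ref{Theorem-Strongly-Convex-Average} but with the non-uniform weights $\alpha^{k+1}=\tfrac{2(k+3)}{(t+1)(t+6)}$ (which satisfy $\sum_{k=0}^{t}\alpha^{k+1}=1$) and the step-size $c^{k+1}=\tfrac{4}{\mu(k+2)+4\tilde L}=\Theta(1/k)$, and then squeezing out the extra $\log t$ factor present in Theorem~\ref{Theorem-Strongly-Convex-Average}. Fix an arbitrary reference triple $(z,x,\lambda)$ with $x\in\XCal$. The first step is a one-iteration inequality assembled from: (i) the optimality condition of the $z$-subproblem~\eqref{Eq:Update_Z}, which gives $r_2(z^{k+1})-r_2(z)\le\langle\bar\lambda^{k+1},z-z^{k+1}\rangle$ after invoking~\eqref{Eq:Update_barLambda}; (ii) the two prox three-point inequalities produced by~\eqref{Eq:Update_barX} and~\eqref{Eq:Update_X} (used at $u=x^{k+1}$ and $u=x$ respectively and combined in the Korpelevich/Nemirovski extra-gradient manner); and (iii) the dual updates rewritten as the identities $Fx^k-z^{k+1}=\tfrac1\gamma(\lambda^k-\bar\lambda^{k+1})$ and $F\bar x^{k+1}-z^{k+1}=\tfrac1\gamma(\lambda^k-\lambda^{k+1})$. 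Using Assumption~\ref{assumption_2} (the descent lemma for $l$) and Assumption~\ref{assumption_5} (the decrement $-\tfrac\mu2\|x^k-x\|^2$, on which the $O(1/t)$ rate ultimately rests), these combine --- after scaling by $\alpha^{k+1}$ --- into a bound of the form
\[
\alpha^{k+1}\Delta^{k+1}(z,x,\lambda)\le\tfrac12\Bigl(\tfrac{\alpha^{k+1}}{c^{k+1}}-\alpha^{k+1}\mu\Bigr)\|x^k-x\|^2-\tfrac{\alpha^{k+1}}{2c^{k+1}}\|x^{k+1}-x\|^2+\tfrac{\gamma\alpha^{k+1}}{2}\bigl(\|\lambda^k-\lambda\|^2-\|\lambda^{k+1}-\lambda\|^2\bigr)-\alpha^{k+1}P_k+\alpha^{k+1}E_k ,
\]
where $\Delta^{k+1}(z,x,\lambda)=l(\bar x^{k+1})+r_1(\bar x^{k+1})+r_2(z^{k+1})-l(x)-r_1(x)-r_2(z)-\langle\lambda,F\bar x^{k+1}-z^{k+1}\rangle+\langle\bar\lambda^{k+1},Fx-z\rangle$ is the primal--dual gap along the iterates, $P_k\ge0$ gathers the negative quadratics $\|x^k-\bar x^{k+1}\|^2$, $\|\bar x^{k+1}-x^{k+1}\|^2$ and $\|\lambda^k-\bar\lambda^{k+1}\|^2$ produced by the prox and dual steps, and $E_k$ is the stochastic remainder.

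Second, I would control $E_k$. Since $G(z,x,\lambda;\xi)=\nabla l(x,\xi)-F^\top\lambda$ has a deterministic dual part, all randomness enters through $\nabla l(\cdot,\xi)$. Splitting each cross term --- for instance $\langle\nabla l(x^k,\xi_1^{k+1})-\nabla l(x^k),x^k-\bar x^{k+1}\rangle$ and its $\xi_2^{k+1}$ analogue --- by Young's inequality with a parameter proportional to $\tilde L$, and using $\|\nabla l(x^k)-\nabla l(\bar x^{k+1})\|\le L\|x^k-\bar x^{k+1}\|$ (Assumption~\ref{assumption_2}) together with $\|F(\cdot)\|^2\le\sigma_{\max}(F^\top F)\|\cdot\|^2$, lets me absorb every $\|x^k-\bar x^{k+1}\|^2$, $\|\bar x^{k+1}-x^{k+1}\|^2$ and $\|\lambda^k-\bar\lambda^{k+1}\|^2$ contribution appearing in $E_k$ into $\alpha^{k+1}P_k$: this is precisely the calibration encoded in $\tilde L=\max\{8\gamma\,\sigma_{\max}(F^\top F)+\mu,\sqrt{8L^2+\gamma\,\sigma_{\max}(F^\top F)}+\mu\}$. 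What then survives in $E_k$, in conditional expectation given the first $k$ iterations, is bounded by a constant times $c^{k+1}\sigma^2$ (Assumption~\ref{assumption_4}), while the leftover inner products $\langle\nabla l(x^k,\xi_i^{k+1})-\nabla l(x^k),x^k-x\rangle$ vanish in conditional expectation because $x^k$ and the reference $x$ are measurable with respect to the past.

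Third, take total expectation and sum over $k=0,\dots,t$. By an Abel summation the primal distance terms collapse: after the index shift the coefficient of $\|x^k-x\|^2$ is $\tfrac{\alpha^{k+1}}{c^{k+1}}-\alpha^{k+1}\mu-\tfrac{\alpha^{k}}{c^{k}}=\tfrac{2\tilde L-\mu(k+4)}{(t+1)(t+6)}$, which is nonpositive for all but the first $O(\tilde L/\mu)$ indices, and on those few early indices it is positive but each term is $O(1/t^2)$ since $\|x^k-x\|\le D_x$, so the whole primal contribution (together with the boundary term $\tfrac{\alpha^1}{c^1}D_x^2=O(1/t^2)$) is $O(1/t^2)$. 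A second Abel summation handles the dual terms: with $\alpha^{k+1}-\alpha^k=\tfrac{2}{(t+1)(t+6)}$ the sum is $O(t)\cdot\Theta(1/t^2)=O(1/t)$, and here the hypothesis $\|\lambda^k\|\le D_\lambda$ is essential, as it keeps $\|\lambda^k-\lambda\|^2$ uniformly bounded --- this is the role uniform weighting played for free in Theorem~\ref{Theorem-Strongly-Convex-Average}. Finally $\sum_{k=0}^{t}\alpha^{k+1}c^{k+1}\sigma^2=O(\sigma^2/t)$ since $\alpha^{k+1}c^{k+1}=\Theta(1/((t+1)(t+6)))$. Hence $\sum_{k=0}^{t}\alpha^{k+1}\ex[\Delta^{k+1}(z,x,\lambda)]=O(1/t)$, uniformly in $(z,x,\lambda)$ over the relevant bounded sets. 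Convexity of $l$, $r_1$, $r_2$ and linearity of the coupling terms then pull the average inside: with $\tilde x^t=\sum\alpha^{k+1}\bar x^{k+1}$, $\tilde z^t=\sum\alpha^{k+1}z^{k+1}$ and $\tilde\lambda^t=\sum\alpha^{k+1}\bar\lambda^{k+1}$,
\[
\ex\bigl[l(\tilde x^t)+r_1(\tilde x^t)+r_2(\tilde z^t)\bigr]-l(x)-r_1(x)-r_2(z)-\langle\lambda,F\ex[\tilde x^t]-\ex[\tilde z^t]\rangle+\langle\ex[\tilde\lambda^t],Fx-z\rangle=O(1/t).
\]
Setting $(z,x)=(z^*,x^*)$ and maximizing the left-hand side over $\lambda$ in the ball of radius $\|\lambda^*\|+1$ extracts $\|F\ex[\tilde x^t]-\ex[\tilde z^t]\|=O(1/t)$, which is~\eqref{result-strongly-convex-nonaverage-2}; substituting this back, together with the saddle-point lower bound $l(\tilde x^t)+r_1(\tilde x^t)+r_2(\tilde z^t)-l(x^*)-r_1(x^*)-r_2(z^*)\ge-\|\lambda^*\|\,\|F\tilde x^t-\tilde z^t\|$ and the choice $\lambda=0$, yields~\eqref{result-strongly-convex-nonaverage-1}.

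The main obstacle I anticipate is the balancing act in the third step: one must verify that the closed-form $\alpha^{k+1}$ and $c^{k+1}$ simultaneously make the primal distance terms telescope up to an $O(1/t^2)$ residual \emph{and} keep the dual and variance contributions at $O(1/t)$ rather than $O(\log t/t)$ --- this three-way trade-off, in which the dual-boundedness hypothesis $D_\lambda$ does the work that uniform averaging did automatically before, is exactly what distinguishes the non-uniform scheme. The secondary difficulty is the noise bookkeeping in the first two steps: because $\bar x^{k+1}$ and $\bar\lambda^{k+1}$ both depend on $\xi_1^{k+1}$, the extra-gradient cross terms are \emph{not} conditionally zero-mean and must be tamed by absorption into the negative $P_k$ quadratics, which is what forces the precise, otherwise opaque, constants inside $\tilde L$.
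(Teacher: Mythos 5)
Your proposal is correct and follows essentially the same route as the paper: the one-iteration primal--dual inequality with noise and cross terms absorbed via the $\tilde{L}$ calibration (the paper's Lemma~\ref{Lemma-Unified} and Lemma~\ref{Lemma-Strongly-Convex-Non-Average}), then weighting by $\alpha^{k+1}=\frac{2(k+3)}{(t+1)(t+6)}$, telescoping with the dual-boundedness $D_\lambda$ and $\sum_k \alpha^{k+1}c^{k+1}\sigma^2 = O(1/t)$, and finally the standard perturbation-function/Lagrangian argument to split objective error and constraint violation. The only differences are bookkeeping choices (where the strong-convexity gain is placed and how the few early indices with positive coefficients are handled via $D_x$), which if anything make your telescoping slightly more explicit than the paper's.
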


\section{Proof}
We first prove the key technical lemma which is very important to the proof of Theorem \ref{Theorem-Convex-Average}-Theorem \ref{Theorem-Strongly-Convex-Nonaverage}. 
\begin{restatable}{lemma}{pLemmaUnified}
\label{Lemma-Unified}
The sequence $\left\{z^{k+1}, \bar{x}^{k+1},\bar{\lambda}^{k+1}, x^{k+1}, \lambda^{k+1}\right\}$ generated by the SPDPEG algorithm satisfies the following inequality:
\begin{eqnarray}\label{inequality-noisy-optimality}
& & r_1(x) + r_2(z) - r_1(\bar{x}^{k+1}) - r_2(z^{k+1}) + \left( \begin{array}{c} z - z^{k+1} \\ x - \bar{x}^{k+1} \\ \lambda - \bar{\lambda}^{k+1} \end{array} \right)^\top\left(\begin{array}{c} \bar{\lambda}^{k+1} \\ G\left( z^{k+1}, \bar{x}^{k+1}, \bar{\lambda}^{k+1}; \xi_2^{k+1}\right) \\  F\bar{x}^{k+1} - z^{k+1}\end{array}\right) \nonumber \\
& \geq & \frac{1}{2c^{k+1}}\left\| x - x^{k+1} \right\|^2 - \frac{1}{2c^{k+1}}\left\| x - x^k \right\|^2 - 4c^{k+1}\left\|\delta^{k+1}\right\|^2 - 4c^{k+1}\left\|\bar{\delta}^{k+1}\right\|^2 \nonumber \\
& & -\frac{1}{2\gamma}\left\|\lambda - \lambda^k\right\|^2 + \frac{1}{2\gamma}\left\|\lambda - \lambda^{k+1}\right\|^2 + \left[ \frac{1}{2\gamma} -4c^{k+1}\sigma_{\max}(F^\top F)\right] \left\| \lambda^k - \bar{\lambda}^{k+1}\right\|^2 \nonumber \\
& & + \left[ \frac{1}{2c^{k+1}} - \frac{\gamma\sigma_{\max}(F^\top F)}{2} - 4c^{k+1}L^2\right]\left\| x^k - \bar{x}^{k+1}\right\|^2 + \frac{1}{2c^{k+1}}\left\| x^{k+1} - \bar{x}^{k+1} \right\|^2,
\end{eqnarray}
where $\delta^{k+1}$ and $\bar{\delta}^{k+1}$ are respectively denoted by
\begin{equation}\label{def-delta}
\delta^{k+1} = \nabla l(x^k, \xi_1^{k+1}) - \nabla l(x^k) \quad \textrm{and} \quad  \bar{\delta}^{k+1} = \nabla l(\bar{x}^{k+1}, \xi_2^{k+1}) - \nabla l(\bar{x}^{k+1}).
\end{equation}
\end{restatable}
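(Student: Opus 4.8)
The plan is to convert each of the three proximal updates into a variational inequality via its first-order optimality condition, combine the two $r_1$-updates in the classical extra-gradient manner, treat the dual update by a polarization identity, and---the one genuinely delicate point---control the resulting cross term through the nonexpansiveness of the proximal map rather than a crude Young inequality, which is precisely what keeps the coefficient of $\|x^{k+1}-\bar{x}^{k+1}\|^2$ equal to $\tfrac{1}{2c^{k+1}}$.

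First I would fix notation: write $G_1 := \nabla l(x^k,\xi_1^{k+1})-F^\top\lambda^k$ and $G_2 := \nabla l(\bar{x}^{k+1},\xi_2^{k+1})-F^\top\bar{\lambda}^{k+1}$ for the two stochastic gradients used in \eqref{Eq:Update_barX} and \eqref{Eq:Update_X}, and record the identities $\bar{\lambda}^{k+1}-\lambda^k = \gamma(z^{k+1}-Fx^k)$, $\bar{\lambda}^{k+1}-\lambda^{k+1} = \gamma F(\bar{x}^{k+1}-x^k)$, and $F\bar{x}^{k+1}-z^{k+1} = \tfrac1\gamma(\lambda^k-\lambda^{k+1})$, all immediate from \eqref{Eq:Update_barLambda}--\eqref{Eq:Update_Lambda}. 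The optimality condition for \eqref{Eq:Update_Z} is $-\bar{\lambda}^{k+1}\in\partial r_2(z^{k+1})$, so convexity of $r_2$ gives $r_2(z)-r_2(z^{k+1})+\langle\bar{\lambda}^{k+1},z-z^{k+1}\rangle\ge0$ for every $z$; the optimality conditions for \eqref{Eq:Update_barX} and \eqref{Eq:Update_X} read $\tfrac1{c^{k+1}}(x^k-\bar{x}^{k+1})-G_1\in\partial r_1(\bar{x}^{k+1})$ and $\tfrac1{c^{k+1}}(x^k-x^{k+1})-G_2\in\partial r_1(x^{k+1})$.

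Next I would write the subgradient inequality for $r_1$ at $x^{k+1}$ tested against an arbitrary $x$, and at $\bar{x}^{k+1}$ tested against $x^{k+1}$, add the two, and add $\langle G_2,x-\bar{x}^{k+1}\rangle$ to both sides; after rearranging the gradient inner products this leaves
\[ r_1(x)-r_1(\bar{x}^{k+1})+\langle G_2,x-\bar{x}^{k+1}\rangle \;\ge\; \tfrac1{c^{k+1}}\langle x^k-x^{k+1},x-x^{k+1}\rangle+\tfrac1{c^{k+1}}\langle x^k-\bar{x}^{k+1},x^{k+1}-\bar{x}^{k+1}\rangle+\langle G_2-G_1,x^{k+1}-\bar{x}^{k+1}\rangle, \]
and the polarization identity $2\langle u,v\rangle=\|u\|^2+\|v\|^2-\|u-v\|^2$ applied to the first two terms cancels the $\|x^k-x^{k+1}\|^2$ contributions and produces exactly $\tfrac1{2c^{k+1}}\bigl(\|x-x^{k+1}\|^2-\|x-x^k\|^2+\|x^k-\bar{x}^{k+1}\|^2+\|x^{k+1}-\bar{x}^{k+1}\|^2\bigr)$. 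For the dual inner product $\langle\lambda-\bar{\lambda}^{k+1},F\bar{x}^{k+1}-z^{k+1}\rangle$ I would substitute $F\bar{x}^{k+1}-z^{k+1}=\tfrac1\gamma(\lambda^k-\lambda^{k+1})$ and expand via $2\langle a-b,c-d\rangle=\|a-d\|^2+\|b-c\|^2-\|a-c\|^2-\|b-d\|^2$ with $(a,b,c,d)=(\lambda,\bar{\lambda}^{k+1},\lambda^k,\lambda^{k+1})$; the only unfavorable term $-\tfrac1{2\gamma}\|\bar{\lambda}^{k+1}-\lambda^{k+1}\|^2$ is at least $-\tfrac{\gamma\sigma_{\max}(F^\top F)}2\|x^k-\bar{x}^{k+1}\|^2$ by $\bar{\lambda}^{k+1}-\lambda^{k+1}=\gamma F(\bar{x}^{k+1}-x^k)$ and $\|Fv\|^2\le\sigma_{\max}(F^\top F)\|v\|^2$.

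The remaining obstacle, and the heart of the argument, is the cross term $\langle G_2-G_1,x^{k+1}-\bar{x}^{k+1}\rangle$. Since $x^{k+1}$ and $\bar{x}^{k+1}$ are images of $x^k-c^{k+1}G_2$ and $x^k-c^{k+1}G_1$ under the nonexpansive map $\prox_{c^{k+1}r_1}$, one has $\|x^{k+1}-\bar{x}^{k+1}\|\le c^{k+1}\|G_2-G_1\|$, hence $\langle G_2-G_1,x^{k+1}-\bar{x}^{k+1}\rangle\ge -c^{k+1}\|G_2-G_1\|^2$; this is what avoids spending any of the $\tfrac1{2c^{k+1}}\|x^{k+1}-\bar{x}^{k+1}\|^2$ budget. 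Decomposing $G_2-G_1=[\nabla l(\bar{x}^{k+1})-\nabla l(x^k)]+\bar{\delta}^{k+1}-\delta^{k+1}-F^\top(\bar{\lambda}^{k+1}-\lambda^k)$ via \eqref{def-delta} and using $\|u_1+u_2+u_3+u_4\|^2\le 4\sum_i\|u_i\|^2$ together with Assumption~\ref{assumption_2} and $\|F^\top w\|^2\le\sigma_{\max}(F^\top F)\|w\|^2$ yields $c^{k+1}\|G_2-G_1\|^2\le 4c^{k+1}L^2\|x^k-\bar{x}^{k+1}\|^2+4c^{k+1}\|\delta^{k+1}\|^2+4c^{k+1}\|\bar{\delta}^{k+1}\|^2+4c^{k+1}\sigma_{\max}(F^\top F)\|\lambda^k-\bar{\lambda}^{k+1}\|^2$. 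Summing the primal bound, the dual bound and this cross-term bound, and using $r_2(z)-r_2(z^{k+1})+\langle\bar{\lambda}^{k+1},z-z^{k+1}\rangle\ge0$ to lower-bound the corresponding block of the left-hand side by zero, reassembles exactly \eqref{inequality-noisy-optimality}; beyond the bookkeeping, the only care needed is the equal-weight four-way split above, chosen so that $L$, the spectral constant $\sigma_{\max}(F^\top F)$, and the two variance terms land on the precise coefficients claimed.
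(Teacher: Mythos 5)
Your proposal is correct and follows essentially the same route as the paper's proof: optimality conditions for the three updates, the contraction bound $\|x^{k+1}-\bar{x}^{k+1}\|\le c^{k+1}\|G_2-G_1\|$ (which the paper derives by summing the two variational inequalities, equivalent to your prox-nonexpansiveness invocation) to control the cross term, the same four-way split with $L$ and $\sigma_{\max}(F^\top F)$, and the same polarization identity on the dual update. The intermediate algebra and all resulting coefficients match \eqref{inequality-noisy-optimality} exactly.
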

\begin{proof}
The first-order optimality condition for updating $z^{k+1}$ is given by
\begin{equation}\label{inequality-Z-gap}
r_2(z) - r_2(z^{k+1}) + \left\langle z - z^{k+1}, \bar{\lambda}^{k+1}\right\rangle \geq 0.
\end{equation}
For $\bar{x}^{k+1}, x^{k+1}\in\XCal$ and any $x\in\XCal$, the first-order optimality condition for updating $\bar{x}^{k+1}$ and $x^{k+1}$ are given respectively by
\begin{align}
r_1(x) - r_1(\bar{x}^{k+1}) + \left\langle x - \bar{x}^{k+1}, \frac{\bar{x}^{k+1} - x^k}{c^{k+1}} + G\left(z^{k+1}, x^k, \lambda^k; \xi_1^{k+1}\right) \right\rangle & \geq 0, \label{opt-barX} \\
r_1(x) - r_1(x^{k+1}) + \left\langle x - x^{k+1}, \frac{x^{k+1} - x^k}{c^{k+1}} + G\left( z^{k+1}, \bar{x}^{k+1}, \bar{\lambda}^{k+1}; \xi_2^{k+1}\right)\right\rangle & \geq 0. \label{opt-X}
\end{align}
Setting $x = x^{k+1}$ in \eqref{opt-barX} and $x = \bar{x}^{k+1}$ in \eqref{opt-X}, and summing two resulting inequalities yields that
\begin{eqnarray*}
& & \frac{1}{c^{k+1}}\left\|  x^{k+1} - \bar{x}^{k+1} \right\|^2 \\
& \leq & \left\langle x^{k+1} - \bar{x}^{k+1}, G\left(z^{k+1},x^k,\lambda^{k}; \xi_1^{k+1}\right) - G\left(z^{k+1},\bar{x}^{k+1}, \bar{\lambda}^{k+1}; \xi_2^{k+1}\right)\right\rangle \\
& \leq & \left\| x^{k+1} - \bar{x}^{k+1} \right\| \left\| G\left(z^{k+1},x^k,\lambda^{k}; \xi_1^{k+1}\right) - G\left(z^{k+1},\bar{x}^{k+1}, \bar{\lambda}^{k+1}; \xi_2^{k+1}\right) \right\|,
\end{eqnarray*}
which implies that
\begin{equation}\label{x-G-bounded}
\left\|  x^{k+1} - \bar{x}^{k+1} \right\| \leq c^{k+1}\left\| G\left(z^{k+1},x^k,\lambda^{k}; \xi_1^{k+1}\right) - G\left(z^{k+1},\bar{x}^{k+1}, \bar{\lambda}^{k+1}; \xi_2^{k+1}\right) \right\|.  
\end{equation}
Therefore, we get
\begin{eqnarray}\label{inequality-x-xbar}
& & r_1(x^{k+1}) - r_1(\bar{x}^{k+1}) + \left\langle x^{k+1} - \bar{x}^{k+1}, G\left(z^{k+1},\bar{x}^{k+1}, \bar{\lambda}^{k+1}; \xi_2^{k+1}\right) \right\rangle \\
& \geq & \left\langle x^{k+1} - \bar{x}^{k+1},  G\left(z^{k+1},\bar{x}^{k+1}, \bar{\lambda}^{k+1}; \xi_2^{k+1}\right) - G\left(z^{k+1},x^k,\lambda^{k}; \xi_1^{k+1}\right) \right\rangle \nonumber \\
& & - \left\langle x^{k+1} - \bar{x}^{k+1}, \frac{\bar{x}^{k+1} - x^k}{c^{k+1}}\right\rangle \nonumber \\
& \geq & -c^{k+1}\left\| G\left(z^{k+1},\bar{x}^{k+1}, \bar{\lambda}^{k+1}; \xi_2^{k+1}\right) - G\left(z^{k+1},x^k,\lambda^{k}; \xi_1^{k+1}\right) \right\|^2 \nonumber \\
& & - \frac{1}{2c^{k+1}}\left\| x^{k+1} - x^k\right\|^2 + \frac{1}{2c^{k+1}}\left\| x^{k+1} - \bar{x}^{k+1} \right\|^2 + \frac{1}{2c^{k+1}}\left\| \bar{x}^{k+1} - x^k \right\|^2.  \nonumber 
\end{eqnarray}
where the first inequality is obtained by letting $x=x^{k+1}$ in \eqref{opt-barX} and the second inequality follows from \eqref{x-G-bounded}. Furthermore, we have 
\begin{eqnarray}\label{G-x-lambda-bounded}
& & \left\| G\left(z^{k+1},\bar{x}^{k+1}, \bar{\lambda}^{k+1}; \xi_2^{k+1}\right) - G\left(z^{k+1},x^k,\lambda^{k}; \xi_1^{k+1}\right) \right\|^2 \\
& = & \left\| \bar{\delta}^{k+1} + \nabla l(\bar{x}^{k+1}) - F^\top\bar{\lambda}^{k+1}  - \left[ \delta^{k+1} + \nabla l(x^k) - F^\top\lambda^k \right]\right\|^2 \nonumber \\
& \leq & 4\left\|\delta^{k+1}\right\|^2 + 4\left\|\bar{\delta}^{k+1}\right\|^2 +  4L^2\left\| x^k - \bar{x}^{k+1}\right\|^2 + 4\sigma_{\max}(F^\top F)\left\| \lambda^k - \bar{\lambda}^{k+1}\right\|^2,    \nonumber
\end{eqnarray}
where $\delta^{k+1}$ and $\bar{\delta}^{k+1}$ are defined in \eqref{def-delta}. By substituting \eqref{G-x-lambda-bounded} into \eqref{inequality-x-xbar}, and then summing the resulting inequality and \eqref{opt-X}, we have
\begin{eqnarray}\label{inequality-X-gap}
& & r_1(x) - r_1(\bar{x}^{k+1}) + \left\langle x - \bar{x}^{k+1}, G\left(z^{k+1},\bar{x}^{k+1}, \bar{\lambda}^{k+1}; \xi_2^{k+1}\right)\right\rangle \\
& \geq & -4c^{k+1}\left\|\delta^{k+1}\right\|^2 - 4c^{k+1}\left\|\bar{\delta}^{k+1}\right\|^2 - 4c^{k+1}\sigma_{\max}(F^\top F)\left\| \lambda^k - \bar{\lambda}^{k+1}\right\|^2 \nonumber \\
& & - 4c^{k+1}L^2\left\| x^k - \bar{x}^{k+1}\right\|^2 - \frac{1}{2c^{k+1}}\left\| x^{k+1} - x^k\right\|^2 + \frac{1}{2c^{k+1}}\left\| x^{k+1} - \bar{x}^{k+1} \right\|^2 \nonumber \\
& & + \frac{1}{2c^{k+1}}\left\| \bar{x}^{k+1} - x^k \right\|^2 - \left\langle x - x^{k+1}, \frac{x^{k+1} - x^k}{c^{k+1}} \right\rangle  \nonumber \\ 
& = & -4c^{k+1}\left\|\delta^{k+1}\right\|^2 - 4c^{k+1}\left\|\bar{\delta}^{k+1}\right\|^2 - 4c^{k+1}\sigma_{\max}(F^\top F)\left\| \lambda^k - \bar{\lambda}^{k+1}\right\|^2 \nonumber \\
& & + \frac{1}{2c^{k+1}}\left\| x^{k+1} - \bar{x}^{k+1} \right\|^2 + \left[ \frac{1}{2c^{k+1}} - 4c^{k+1}L^2\right]\left\| x^k - \bar{x}^{k+1}\right\|^2   \nonumber \\
& & - \frac{1}{2c^{k+1}}\left\| x - x^k \right\|^2 + \frac{1}{2c^{k+1}}\left\| x - x^{k+1} \right\|^2.   \nonumber
\end{eqnarray}
On the other hand, we have
\begin{eqnarray}\label{inequality-Lambda-gap}
& & \left\langle\lambda - \bar{\lambda}^{k+1}, F\bar{x}^{k+1} - z^{k+1}\right\rangle \\
& = & \frac{1}{\gamma}\left\langle\lambda - \lambda^{k+1} + \lambda^{k+1} - \bar{\lambda}^{k+1}, \lambda^k - \lambda^{k+1}\right\rangle \nonumber \\
& = & -\frac{1}{2\gamma}\left\|\lambda - \lambda^k\right\|^2 + \frac{1}{2\gamma}\left\|\lambda - \lambda^{k+1}\right\|^2 - \frac{1}{2\gamma}\left\|\lambda^{k+1} - \bar{\lambda}^{k+1}\right\|^2 + \frac{1}{2\gamma}\left\|\lambda^k - \bar{\lambda}^{k+1}\right\|^2 \nonumber \\
& \geq &  -\frac{1}{2\gamma}\left\|\lambda - \lambda^k\right\|^2 + \frac{1}{2\gamma}\left\|\lambda - \lambda^{k+1}\right\|^2 + \frac{1}{2\gamma}\left\|\lambda^k - \bar{\lambda}^{k+1}\right\|^2 - \frac{\gamma\sigma_{\max}(F^\top F)}{2}\left\| x^k - \bar{x}^{k+1}\right\|^2 \nonumber,
\end{eqnarray}
where the last inequality holds since
\begin{displaymath}
\lambda^{k+1} - \bar{\lambda}^{k+1} = \gamma\left( F\bar{x}^{k+1} - z^{k+1}\right) - \gamma\left( Fx^k - z^{k+1}\right) = \gamma \left( F\bar{x}^{k+1} - Fx^k\right). 
\end{displaymath}
Finally, combining \eqref{inequality-Z-gap}, \eqref{inequality-X-gap} and \eqref{inequality-Lambda-gap} yields \eqref{inequality-noisy-optimality}.
\end{proof}

\subsection{Proof of Theorem \ref{Theorem-Convex-Average}}
\begin{restatable}{lemma}{pLemmaConvexAverage}
\label{Lemma-Convex-Average}
Suppose that $\left\{z^{k+1}, \bar{x}^{k+1},\bar{\lambda}^{k+1}, x^{k+1}, \lambda^{k+1}\right\}$ are generated by the SPDPEG algorithm, and $\alpha^{k+1}$ and $c^{k+1}$ are defined in the main paper. For any optimal solution $(z^*, x^*, \lambda^*)$, it holds that
\begin{eqnarray}\label{inequality-convex-average}
& & l(x^*) + r_1(x^*) + r_2(z^*) - \ex [l(\bar{x}^{k+1})] - \ex [r_1(\bar{x}^{k+1})] - \ex[r_2(z^{k+1})] \\
& & + \ex\left[ \left( \begin{array}{c} z^* - z^{k+1} \\ x^* - \bar{x}^{k+1} \\ \lambda - \bar{\lambda}^{k+1} \end{array} \right)^\top\left(\begin{array}{c} \bar{\lambda}^{k+1} \\ -F^\top\bar{\lambda}^{k+1} \\ F\bar{x}^{k+1} - z^{k+1}\end{array}\right)\right] \nonumber \\
& \geq & \frac{\sqrt{k+1}+\tilde{L}}{2}\ex\left\| x^* - x^{k+1} \right\|^2 - \frac{\sqrt{k+1}+\tilde{L}}{2}\ex\left\| x^* - x^k \right\|^2 - \frac{8\sigma^2}{\sqrt{k+1}} \nonumber \\
& & - \frac{1}{2\gamma}\ex\left\|\lambda - \lambda^k\right\|^2 + \frac{1}{2\gamma}\ex\left\|\lambda - \lambda^{k+1}\right\|^2. \nonumber
\end{eqnarray}
\end{restatable}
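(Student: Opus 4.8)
The plan is to specialize the master inequality~\eqref{inequality-noisy-optimality} of Lemma~\ref{Lemma-Unified} to $x=x^*$ and $z=z^*$ (leaving $\lambda$ as a free dual argument), then (i) turn the stochastic-gradient inner product into an objective gap via convexity of $l$, (ii) take expectations so that the linear noise term disappears and the quadratic noise terms are absorbed into $\sigma^2$, and (iii) substitute the prescribed step size $c^{k+1}=1/(\sqrt{k+1}+\tilde L)$ so that the remaining quadratic residuals carry nonnegative coefficients and can be discarded.

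Concretely, I would first write, using~\eqref{def-delta}, $G\bigl(z^{k+1},\bar x^{k+1},\bar\lambda^{k+1};\xi_2^{k+1}\bigr)=\bigl(\nabla l(\bar x^{k+1})-F^\top\bar\lambda^{k+1}\bigr)+\bar\delta^{k+1}$, so that the middle slot of the inner-product vector in~\eqref{inequality-noisy-optimality}, after pairing with $x^*-\bar x^{k+1}$, decomposes into $\langle x^*-\bar x^{k+1},-F^\top\bar\lambda^{k+1}\rangle$, the scalar $\langle x^*-\bar x^{k+1},\nabla l(\bar x^{k+1})\rangle$, and the scalar $\langle x^*-\bar x^{k+1},\bar\delta^{k+1}\rangle$. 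Convexity of $l$ gives $\langle x^*-\bar x^{k+1},\nabla l(\bar x^{k+1})\rangle\le l(x^*)-l(\bar x^{k+1})$ (in the strongly convex setting one instead retains the extra term $-\tfrac\mu2\|x^*-\bar x^{k+1}\|^2$, which is what drives the other two theorems). Replacing this scalar by its upper bound only enlarges the left-hand side, so the inequality is preserved, and its left-hand side now carries precisely the objective-gap expression $l(x^*)+r_1(x^*)+r_2(z^*)-l(\bar x^{k+1})-r_1(\bar x^{k+1})-r_2(z^{k+1})$ together with the vector inner product whose middle entry is $-F^\top\bar\lambda^{k+1}$, plus the leftover term $\langle x^*-\bar x^{k+1},\bar\delta^{k+1}\rangle$.

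Next I would take the full expectation. Conditioning on the $\sigma$-algebra generated by the iterates through step $k$ together with $\xi_1^{k+1}$, the vector $x^*-\bar x^{k+1}$ is measurable while $\xi_2^{k+1}$ is drawn independently, so Assumption~\ref{assumption_4} makes $\bar\delta^{k+1}$ conditionally mean-zero, hence $\ex\langle x^*-\bar x^{k+1},\bar\delta^{k+1}\rangle=0$; the same assumption yields $\ex\|\delta^{k+1}\|^2\le\sigma^2$ and $\ex\|\bar\delta^{k+1}\|^2\le\sigma^2$, so that the term $-4c^{k+1}\|\delta^{k+1}\|^2-4c^{k+1}\|\bar\delta^{k+1}\|^2$ on the right of~\eqref{inequality-noisy-optimality} has expectation at least $-8c^{k+1}\sigma^2$. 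With $c^{k+1}=1/(\sqrt{k+1}+\tilde L)$ we get $1/(2c^{k+1})=(\sqrt{k+1}+\tilde L)/2$, which is exactly the telescoping coefficient of $\|x^*-x^{k+1}\|^2$ and $\|x^*-x^k\|^2$ in~\eqref{inequality-convex-average}, and $8c^{k+1}\sigma^2\le 8\sigma^2/\sqrt{k+1}$.

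The last and most delicate step is to drop the three leftover quadratics $\bigl[\tfrac1{2\gamma}-4c^{k+1}\sigma_{\max}(F^\top F)\bigr]\|\lambda^k-\bar\lambda^{k+1}\|^2$, $\bigl[\tfrac1{2c^{k+1}}-\tfrac{\gamma\sigma_{\max}(F^\top F)}2-4c^{k+1}L^2\bigr]\|x^k-\bar x^{k+1}\|^2$, and $\tfrac1{2c^{k+1}}\|x^{k+1}-\bar x^{k+1}\|^2$. The third is nonnegative outright. For the first, $c^{k+1}\le1/\tilde L\le 1/\bigl(8\gamma\sigma_{\max}(F^\top F)\bigr)$ since $\tilde L\ge 8\gamma\sigma_{\max}(F^\top F)$. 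For the second, clearing the positive factor $2(\sqrt{k+1}+\tilde L)$ reduces nonnegativity to $(\sqrt{k+1}+\tilde L)^2-\gamma\sigma_{\max}(F^\top F)(\sqrt{k+1}+\tilde L)\ge 8L^2$, which I would verify from $\tilde L\ge\sqrt{8L^2+\gamma\sigma_{\max}(F^\top F)}$ together with $\tilde L\ge 8\gamma\sigma_{\max}(F^\top F)$ so that the subtracted cross term stays dominated; this bookkeeping against the definition of $\tilde L$ is the part that needs the most care. Discarding the three nonnegative residuals leaves exactly~\eqref{inequality-convex-average}.
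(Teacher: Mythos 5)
Your proposal follows essentially the same route as the paper's own proof of Lemma~\ref{Lemma-Convex-Average}: specialize Lemma~\ref{Lemma-Unified} at $(z,x)=(z^*,x^*)$, convert the $G$-inner product into an objective gap via convexity of $l$ with the leftover $\left(x^*-\bar{x}^{k+1}\right)^\top\bar{\delta}^{k+1}$ term, kill that term and bound $\ex\|\delta^{k+1}\|^2,\ex\|\bar{\delta}^{k+1}\|^2\le\sigma^2$ by conditioning exactly as you describe, substitute $c^{k+1}=1/(\sqrt{k+1}+\tilde{L})$, and discard the residual quadratics whose coefficients are nonnegative. The one step you flag as needing careful bookkeeping, namely $\frac{1}{2c^{k+1}}-\frac{\gamma\sigma_{\max}(F^\top F)}{2}-4c^{k+1}L^2\ge 0$, is precisely the claim the paper asserts directly from the definition of $\tilde{L}$ without further detail, so your treatment is at the same level of rigor as the original.
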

\begin{proof}
By the definition of $\tilde{L}$, we have $\frac{1}{2\gamma} -4c^{k+1}\sigma_{\max}(F^\top F) \geq 0$ and $\frac{1}{2c^{k+1}} - \frac{\gamma\sigma_{\max}(F^\top F)}{2} - 4c^{k+1}L^2 \geq 0$. Plugging them into \eqref{inequality-noisy-optimality} yields that 
\begin{eqnarray*}
& & r_1(x) + r_2(z) - r_1(\bar{x}^{k+1}) - r_2(z^{k+1}) + \left( \begin{array}{c} z - z^{k+1} \\ x - \bar{x}^{k+1} \\ \lambda - \bar{\lambda}^{k+1} \end{array} \right)^\top\left(\begin{array}{c} \bar{\lambda}^{k+1} \\ G\left( z^{k+1}, \bar{x}^{k+1}, \bar{\lambda}^{k+1}; \xi_2^{k+1} \right) \\ F\bar{x}^{k+1} - z^{k+1}\end{array}\right) \\
& \geq & \frac{1}{2c^{k+1}}\left\| x - x^{k+1} \right\|^2 - \frac{1}{2c^{k+1}}\left\| x - x^k \right\|^2 - 4c^{k+1}\left\|\delta^{k+1}\right\|^2 - 4c^{k+1}\left\|\bar{\delta}^{k+1}\right\|^2 \\
& & - \frac{1}{2\gamma}\left\|\lambda - \lambda^k\right\|^2 + \frac{1}{2\gamma}\left\|\lambda - \lambda^{k+1}\right\|^2. 
\end{eqnarray*}
Moreover, we have
\begin{eqnarray*}
& & \left( x - \bar{x}^{k+1}\right)^\top G\left( z^{k+1}, \bar{x}^{k+1}, \bar{\lambda}^{k+1}; \xi_2^{k+1}\right) \\
& = & \left( x - \bar{x}^{k+1}\right)^\top\nabla l(\bar{x}^{k+1}) + \left( x - \bar{x}^{k+1}\right)^\top\bar{\delta}^{k+1} + \left( x - \bar{x}^{k+1}\right)^\top\left[- F^\top\bar{\lambda}^{k+1}\right] \\
& \leq & l(x) - l(\bar{x}^{k+1}) + \left( x - \bar{x}^{k+1}\right)^\top\left[- F^\top\bar{\lambda}^{k+1}\right] + \left( x - \bar{x}^{k+1}\right)^\top\bar{\delta}^{k+1}.
\end{eqnarray*}
Therefore, we conclude that
\begin{eqnarray*}
& & l(x) + r_1(x) + r_2(z) - l(\bar{x}^{k+1}) - r_1(\bar{x}^{k+1}) - r_2(z^{k+1}) + \left( \begin{array}{c} z - z^{k+1} \\ x - \bar{x}^{k+1} \\ \lambda - \bar{\lambda}^{k+1} \end{array} \right)^\top\left(\begin{array}{c} \bar{\lambda}^{k+1} \\ -F^\top\bar{\lambda}^{k+1} \\ F\bar{x}^{k+1} - z^{k+1}\end{array}\right) \\
& \geq & \frac{\sqrt{k+1}+\tilde{L}}{2}\left\| x - x^{k+1} \right\|^2 - \frac{\sqrt{k+1}+\tilde{L}}{2}\left\| x - x^k \right\|^2 - \frac{4}{\sqrt{k+1}+\tilde{L}} \left( \left\|\delta^{k+1}\right\|^2 +\left\|\bar{\delta}^{k+1}\right\|^2 \right) \\
& & - \frac{1}{2\gamma}\left\|\lambda - \lambda^k\right\|^2 + \frac{1}{2\gamma}\left\|\lambda - \lambda^{k+1}\right\|^2 - \left( x - \bar{x}^{k+1}\right)^\top\bar{\delta}^{k+1}. 
\end{eqnarray*}
Since $x^k$ and $\bar{x}^{k+1}$ are independent of $\xi_1^{k+1}$ and $\xi_2^{k+1}$ respectively, we take the expectation on both sides of above inequality conditioning on $\xi_2^{k+1}$, then $\xi_1^{k+1}$, and then $\{\xi_1^j,\xi_2^j\}_{j\leq k}$. Finally, we set $(z,x)=(z^*,x^*)$, and conclude \eqref{inequality-convex-average}.
\end{proof}
We are ready to prove Theorem \ref{Theorem-Convex-Average}. For any $\lambda\in\br^p$, we have
{\small
\begin{eqnarray}\label{theorm-inequality-convex-average}
& & l(x^*) + r_1(x^*) + r_2(z^*) - \ex [l(\tilde{x}^t)] - \ex [r_1(\tilde{x}^t)] - \ex[r_2(\tilde{z}^t)] + \lambda^\top\left(F\ex[\tilde{x}^t] - \ex[\tilde{z}^t]\right) \nonumber \\ 
& = & l(x^*) + r_1(x^*) + r_2(z^*) - \ex [l(\tilde{x}^t)] - \ex [r_1(\tilde{x}^t)] - \ex[r_2(\tilde{z}^t)] + \ex\left[ \left( \begin{array}{c} z^* - \tilde{z}^t \\ x^* - \tilde{x}^t \\ \lambda - \tilde{\lambda}^t \end{array} \right)^\top\left(\begin{array}{c} \tilde{\lambda}^t \\ -F^\top\tilde{\lambda}^t \\ F\tilde{x}^t - \tilde{z}^t\end{array}\right)\right] \nonumber \\ 
& \geq & \frac{1}{t+1}\sum_{k=0}^t \left\{ l(x^*) + r_1(x^*) + r_2(z^*) - \ex [l(\bar{x}^{k+1})] - \ex[r_1(\bar{x}^{k+1})] - \ex[r_2(z^{k+1})] \right. \nonumber \\
& & \left. + \ex\left[ \left( \begin{array}{c} z^* - z^{k+1} \\ x^* - \bar{x}^{k+1} \\ \lambda - \bar{\lambda}^{k+1} \end{array} \right)^\top\left(\begin{array}{c} \bar{\lambda}^{k+1} \\ -F^\top\bar{\lambda}^{k+1} \\ F\bar{x}^{k+1} - z^{k+1} \end{array}\right)\right] \right\} \nonumber \\
& \geq & \frac{1}{t+1}\sum_{k=0}^t \left[ \frac{\sqrt{k+1}+\tilde{L}}{2}\ex\left\| x^* - x^{k+1} \right\|^2 - \frac{\sqrt{k+1}+\tilde{L}}{2}\ex\left\| x^* - x^k \right\|^2 - \frac{8\sigma^2}{\sqrt{k+1}} \right. \nonumber \\
& & \left. - \frac{1}{2\gamma}\ex\left\|\lambda - \lambda^k\right\|^2 + \frac{1}{2\gamma}\ex\left\|\lambda - \lambda^{k+1}\right\|^2\right] \nonumber \\
& \geq & - \frac{\tilde{L}}{2(t+1)}\left\| x^* - x^0\right\|^2 - \frac{D_x^2 + 16\sigma^2}{2\sqrt{t+1}} - \frac{1}{2\gamma(t+1)}\left\|\lambda - \lambda^0\right\|^2, 
\end{eqnarray}
}where the first inequality holds due to the convexity of $l$, $r_1$ and $r_2$. Note that the optimality condition imply the following inequality
\begin{equation}\label{inequality-opt-problem-1}
0 \geq l(x^*) + r_1(x^*) + r_2(z^*) - \ex [l(\tilde{x}^t)] - \ex [r_1(\tilde{x}^t)] - \ex[r_2(\tilde{z}^t)] + (\lambda^*)^\top\left( F\ex[\tilde{x}^t] - \ex[\tilde{z}^t]\right). 
\end{equation}
Now, define $\rho := \left\|\lambda^*\right\|+1$. By using Cauchy-Schwarz inequality in \eqref{inequality-opt-problem-1}, we obtain
\begin{equation}\label{inequality-opt-problem-2}
0 \leq \ex [l(\tilde{x}^t)] + \ex [r_1(\tilde{x}^t)] + \ex[r_2(\tilde{z}^t)] - l(x^*) - r_1(x^*) - r_2(z^*) + \rho\left\| F\ex[\tilde{x}^t] - \ex[\tilde{z}^t] \right\|. 
\end{equation}
By setting $\lambda = -\rho\left(F\ex[\tilde{x}^t] - \ex[\tilde{z}^t]\right)/\left\| F\ex[\tilde{x}^t] - \ex[\tilde{z}^t]\right\|$ in \eqref{theorm-inequality-convex-average}, and noting that $\left\|\lambda\right\| = \rho$, we obtain
\begin{eqnarray}\label{inequality-opt-problem-3}
& & \ex [l(\tilde{x}^t)] + \ex [r_1(\tilde{x}^t)] + \ex[r_2(\tilde{z}^t)] - l(x^*) - r_1(x^*) - r_2(z^*) + \rho\left\| F\ex[\tilde{x}^t] - \ex[\tilde{z}^t] \right\| \nonumber \\
& \leq & \frac{\tilde{L}D_x^2}{2(t+1)} + \frac{D_x^2 + 16\sigma^2}{2\sqrt{t+1}} + \frac{\rho^2 + \left\|\lambda^0\right\|^2}{\gamma(t+1)}. 
\end{eqnarray}
We now define the function
\begin{displaymath}
v(\eta) = \min\left\{ l(x) + r_1(x) + r_2(z) | Fx - z = \eta, x\in\XCal\right\}.
\end{displaymath}
It is easy to verify that $v$ is convex, $v(0) = l(x^*)+r_1(x^*)+ r_2(z^*)$, and $\lambda^*\in\partial v(0)$. Therefore, from the
convexity of $v$, it holds that
\begin{equation}\label{v-convex}
v(\eta)\geq v(0) + \left\langle\lambda^*,\eta\right\rangle \geq l(x^*) + r_1(x^*) + r_2(z^*) - \left\|\lambda^*\right\|\left\|\eta\right\|. 
\end{equation}
Let $\bar{\eta} = F\ex[\tilde{x}^t]-\ex[\tilde{z}^t]$, we have 
\begin{displaymath}
\ex [l(\tilde{x}^t)] + \ex [r_1(\tilde{x}^t)] + \ex[r_2(\tilde{z}^t)] \geq l(\ex[\tilde{x}^t]) + r_1(\ex[\tilde{x}^t]) + r_2(\ex[\tilde{z}^t]) \geq v(\bar{\eta}). 
\end{displaymath}
Therefore, combining \eqref{inequality-opt-problem-2}, \eqref{inequality-opt-problem-3} and \eqref{v-convex}, we get
\begin{eqnarray*}
-\left\|\lambda^*\right\|\left\|\bar{\eta}\right\| & \leq & \ex [l(\tilde{x}^t)] + \ex [r_1(\tilde{x}^t)] + \ex[r_2(\tilde{z}^t)] - l(x^*) - r_1(x^*) - r_2(z^*) \\
& \leq & \frac{\tilde{L}D_x^2}{2(t+1)} + \frac{D_x^2 + 16\sigma^2}{2\sqrt{t+1}} + \frac{\rho^2 + \left\|\lambda^0\right\|^2}{\gamma(t+1)} - \rho\left\|\bar{\eta}\right\|,
\end{eqnarray*}
which implies \eqref{result-convex-average-1} and \eqref{result-convex-average-2}.

\subsection{Proof of Theorem \ref{Theorem-Strongly-Convex-Average}}
\begin{restatable}{lemma}{pLemmaStronglyConvexAverage}
\label{Lemma-Strongly-Convex-Average}
Let $\left\{z^{k+1}, \bar{x}^{k+1},\bar{\lambda}^{k+1},x^{k+1},\lambda^{k+1}\right\}$ be generated by the SPDPEG Algorithm, and $\alpha^{k+1}$ and $c^{k+1}$ be defined in the main paper. For any optimal solution $\left(z^*, x^*\right)$, it holds that
\begin{eqnarray}\label{inequality-strongly-convex-average}
& & l(x^*) + r_1(x^*) + r_2(z^*) - \ex [l(\bar{x}^{k+1})] - \ex [r_1(\bar{x}^{k+1})] - \ex[r_2(z^{k+1})] \nonumber \\
& & + \ex\left[ \left( \begin{array}{c} z^* - z^{k+1} \\ x^* - \bar{x}^{k+1} \\ \lambda - \bar{\lambda}^{k+1} \end{array} \right)^\top\left(\begin{array}{c} \bar{\lambda}^{k+1} \\ -F^\top\bar{\lambda}^{k+1} \\ F\bar{x}^{k+1} - z^{k+1}\end{array}\right)\right] \nonumber \\
& \geq & \frac{\mu(k+2)+2\tilde{L}}{4}\ex\left\| x^* - x^{k+1} \right\|^2 - \frac{\mu(k+1)+2\tilde{L}}{4}\ex\left\| x^* - x^k \right\|^2 - \frac{16\sigma^2}{\mu(k+1)} \nonumber \\
& & - \frac{1}{2\gamma}\ex\left\|\lambda - \lambda^k\right\|^2 + \frac{1}{2\gamma}\ex\left\|\lambda - \lambda^{k+1}\right\|^2.
\end{eqnarray}
\end{restatable}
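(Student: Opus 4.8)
The plan is to follow the proof of Lemma~\ref{Lemma-Convex-Average} step for step, with two modifications: the step-size is now $c^{k+1}=\frac{2}{\mu(k+1)+2\tilde{L}}$, which still obeys $c^{k+1}\le 1/\tilde{L}$, and plain convexity of $l$ is upgraded to the strong-convexity inequality of Assumption~\ref{assumption_5}. Starting from Lemma~\ref{Lemma-Unified}, the bound $c^{k+1}\le 1/\tilde{L}$ together with the definition of $\tilde{L}$ still forces the two bracketed coefficients of $\|\lambda^k-\bar{\lambda}^{k+1}\|^2$ and $\|x^k-\bar{x}^{k+1}\|^2$ in \eqref{inequality-noisy-optimality} to be nonnegative, so those two quadratic terms may be dropped; crucially, I would \emph{keep} the term $\frac{1}{2c^{k+1}}\|x^{k+1}-\bar{x}^{k+1}\|^2=\frac{\mu(k+1)+2\tilde{L}}{4}\|x^{k+1}-\bar{x}^{k+1}\|^2$, since it is what will absorb the extra strong-convexity gain.

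Next, exactly as in the convex proof, I would set $(z,x)=(z^*,x^*)$ and expand $G(z^{k+1},\bar{x}^{k+1},\bar{\lambda}^{k+1};\xi_2^{k+1})=\bar{\delta}^{k+1}+\nabla l(\bar{x}^{k+1})-F^\top\bar{\lambda}^{k+1}$; the only change is that where convexity of $l$ gives $l(x^*)-l(\bar{x}^{k+1})-(x^*-\bar{x}^{k+1})^\top\nabla l(\bar{x}^{k+1})\ge 0$, Assumption~\ref{assumption_5} gives the stronger bound $\ge\frac{\mu}{2}\|x^*-\bar{x}^{k+1}\|^2$. Hence the same rearrangement shows that the target left-hand side is bounded below by the right-hand side of \eqref{inequality-noisy-optimality} plus an extra $\frac{\mu}{2}\|x^*-\bar{x}^{k+1}\|^2$, modulo the term $-(x^*-\bar{x}^{k+1})^\top\bar{\delta}^{k+1}$ whose expectation will vanish.

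The key manipulation is to move this $\frac{\mu}{2}\|x^*-\bar{x}^{k+1}\|^2$ onto $\|x^*-x^{k+1}\|^2$: using $\|x^*-x^{k+1}\|^2\le 2\|x^*-\bar{x}^{k+1}\|^2+2\|x^{k+1}-\bar{x}^{k+1}\|^2$ one gets $\frac{\mu}{2}\|x^*-\bar{x}^{k+1}\|^2\ge\frac{\mu}{4}\|x^*-x^{k+1}\|^2-\frac{\mu}{2}\|x^{k+1}-\bar{x}^{k+1}\|^2$, and the negative term $-\frac{\mu}{2}\|x^{k+1}-\bar{x}^{k+1}\|^2$ is absorbed into the retained $\frac{\mu(k+1)+2\tilde{L}}{4}\|x^{k+1}-\bar{x}^{k+1}\|^2$ (legitimate because $\tilde{L}\ge\mu$, so the residual coefficient $\frac{\mu(k-1)+2\tilde{L}}{4}$ is nonnegative for all $k\ge 0$ and can be discarded). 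Since $\frac{1}{2c^{k+1}}=\frac{\mu(k+1)+2\tilde{L}}{4}$, the coefficient of $\|x^*-x^{k+1}\|^2$ is upgraded from $\frac{\mu(k+1)+2\tilde{L}}{4}$ to $\frac{\mu(k+1)+2\tilde{L}}{4}+\frac{\mu}{4}=\frac{\mu(k+2)+2\tilde{L}}{4}$, while that of $\|x^*-x^k\|^2$ is unchanged at $\frac{\mu(k+1)+2\tilde{L}}{4}$, matching the claimed inequality.

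Finally I would take expectations, first conditioning on $\xi_2^{k+1}$ so that $\ex[(x^*-\bar{x}^{k+1})^\top\bar{\delta}^{k+1}]=0$ (since $\bar{x}^{k+1}$ is independent of $\xi_2^{k+1}$ and $\ex_{\xi_2^{k+1}}[\bar{\delta}^{k+1}\mid\bar{x}^{k+1}]=0$ by Assumption~\ref{assumption_4}), then on $\xi_1^{k+1}$, then on the earlier samples $\{\xi_1^j,\xi_2^j\}_{j\le k}$. By Assumption~\ref{assumption_4}, $\ex\|\delta^{k+1}\|^2\le\sigma^2$ and $\ex\|\bar{\delta}^{k+1}\|^2\le\sigma^2$, and $4c^{k+1}=\frac{8}{\mu(k+1)+2\tilde{L}}\le\frac{8}{\mu(k+1)}$, so the expectation of $-4c^{k+1}(\|\delta^{k+1}\|^2+\|\bar{\delta}^{k+1}\|^2)$ is at least $-\frac{16\sigma^2}{\mu(k+1)}$, which yields \eqref{inequality-strongly-convex-average}. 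I expect the only delicate point to be the bookkeeping in the reduction from Lemma~\ref{Lemma-Unified} and in the transfer step — making sure the $\frac{1}{2c^{k+1}}\|x^{k+1}-\bar{x}^{k+1}\|^2$ term is preserved and dominates $\frac{\mu}{2}\|x^{k+1}-\bar{x}^{k+1}\|^2$ — everything else being a routine repeat of the convex argument.
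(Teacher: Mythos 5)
Your proposal is correct and follows essentially the same route as the paper: drop the two nonnegative bracketed terms of Lemma~\ref{Lemma-Unified}, retain the $\frac{1}{2c^{k+1}}\left\|x^{k+1}-\bar{x}^{k+1}\right\|^2$ term (the paper keeps just the portion $\frac{\mu}{2}\left\|x^{k+1}-\bar{x}^{k+1}\right\|^2$ via $c^{k+1}<1/\mu$), upgrade convexity to the $\mu$-strong-convexity bound, and combine $\frac{\mu}{2}\left\|x^*-\bar{x}^{k+1}\right\|^2$ with $\frac{\mu}{2}\left\|x^{k+1}-\bar{x}^{k+1}\right\|^2$ through $\left\|a+b\right\|^2\leq 2\left\|a\right\|^2+2\left\|b\right\|^2$ to lift the coefficient of $\left\|x^*-x^{k+1}\right\|^2$ from $\frac{\mu(k+1)+2\tilde{L}}{4}$ to $\frac{\mu(k+2)+2\tilde{L}}{4}$, before taking conditional expectations. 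Your bookkeeping (including $4c^{k+1}\cdot 2\sigma^2\leq \frac{16\sigma^2}{\mu(k+1)}$ and the absorption using $\tilde{L}\geq\mu$) matches the paper's argument, so no gap remains.
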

\begin{proof}
Since $\frac{1}{2\gamma} -4c^{k+1}\sigma_{\max}(F^\top F) \geq 0$ and $\frac{1}{2c^{k+1}} - \frac{\gamma\sigma_{\max}(F^\top F)}{2} - 4c^{k+1}L^2 \geq 0$ and $c^{k+1} < \frac{1}{\mu}$, we conclude from \eqref{inequality-noisy-optimality} that 
\begin{eqnarray*}
& & r_1(x) + r_2(z) - r_1(\bar{x}^{k+1}) - r_2(z^{k+1}) + \left( \begin{array}{c} z - z^{k+1} \\ x - \bar{x}^{k+1} \\ \lambda - \bar{\lambda}^{k+1} \end{array} \right)^\top\left(\begin{array}{c} \bar{\lambda}^{k+1} \\ G\left( z^{k+1}, \bar{x}^{k+1}, \bar{\lambda}^{k+1}; \xi_2^{k+1}\right) \\ F\bar{x}^{k+1} - z^{k+1}\end{array}\right) \\
& \geq & \frac{1}{2c^{k+1}}\left\| x - x^{k+1} \right\|^2 - \frac{1}{2c^{k+1}}\left\| x - x^k \right\|^2 - 4c^{k+1}\left\|\delta^{k+1}\right\|^2 - 4c^{k+1}\left\|\bar{\delta}^{k+1}\right\|^2 \\
& & - \frac{1}{2\gamma}\left\|\lambda - \lambda^k\right\|^2 + \frac{1}{2\gamma}\left\|\lambda - \lambda^{k+1}\right\|^2 + \frac{\mu}{2}\left\| \bar{x}^{k+1} - x^{k+1}\right\|^2. 
\end{eqnarray*}
Moreover, we have
\begin{eqnarray*}
& & \left( x - \bar{x}^{k+1}\right)^\top G\left( z^{k+1}, \bar{x}^{k+1}, \bar{\lambda}^{k+1}; \xi_2^{k+1}\right) \\
& = & \left( x - \bar{x}^{k+1}\right)^\top\nabla l(\bar{x}^{k+1}) + \left( x - \bar{x}^{k+1}\right)^\top\bar{\delta}^{k+1} + \left( x - \bar{x}^{k+1}\right)^\top\left[- A^\top\bar{\lambda}^{k+1}\right] \\
& \leq & l(x) - l(\bar{x}^{k+1}) - \frac{\mu}{2}\left\| x - \bar{x}^{k+1}\right\|^2 + \left( x - \bar{x}^{k+1}\right)^\top\left[- A^\top\bar{\lambda}^{k+1}\right] + \left( x - \bar{x}^{k+1}\right)^\top\bar{\delta}^{k+1}.
\end{eqnarray*}
Therefore, we conclude that
{\small
\begin{eqnarray*}
& & l(x) + r_1(x) + r_2(z) - l(\bar{x}^{k+1}) - r_1(\bar{x}^{k+1}) - r_2(z^{k+1}) + \left( \begin{array}{c} z - z^{k+1} \\ x - \bar{x}^{k+1} \\ \lambda - \bar{\lambda}^{k+1} \end{array} \right)^\top\left(\begin{array}{c} \bar{\lambda}^{k+1} \\ -F^\top\bar{\lambda}^{k+1} \\ F\bar{x}^{k+1} - z^{k+1}\end{array}\right) \\
& \geq & \frac{\mu(k+1)+2\tilde{L}}{4}\left\| x - x^{k+1} \right\|^2 - \frac{\mu(k+1)+2\tilde{L}}{4}\left\| x - x^k \right\|^2 - \frac{8}{\mu(k+1)+\tilde{L}}\left( \left\|\delta^{k+1}\right\|^2 + \left\|\bar{\delta}^{k+1}\right\|^2\right) \\
& & - \frac{1}{2\gamma}\left\|\lambda - \lambda^k\right\|^2 + \frac{1}{2\gamma}\left\|\lambda - \lambda^{k+1}\right\|^2 - \left( x - \bar{x}^{k+1}\right)^\top\bar{\delta}^{k+1} + \frac{\mu}{2}\left\| x - \bar{x}^{k+1}\right\|^2 + \frac{\mu}{2}\left\| \bar{x}^{k+1} - x^{k+1}\right\|^2 \\
& \geq & \frac{\mu(k+2)+2\tilde{L}}{4}\left\| x - x^{k+1} \right\|^2 - \frac{\mu(k+1)+2\tilde{L}}{4}\left\| x - x^k \right\|^2 - \frac{8}{\mu(k+1)+2\tilde{L}}\left( \left\|\delta^{k+1}\right\|^2 + \left\|\bar{\delta}^{k+1}\right\|^2\right) \\
& & - \frac{1}{2\gamma}\left\|\lambda - \lambda^k\right\|^2 + \frac{1}{2\gamma}\left\|\lambda - \lambda^{k+1}\right\|^2 - \left( x - \bar{x}^{k+1}\right)^\top\bar{\delta}^{k+1}. 
\end{eqnarray*}
}Since $x^k$ and $\bar{x}^{k+1}$ are independent of $\xi_1^{k+1}$ and $\xi_2^{k+1}$ respectively, we take the expectation on both sides of above inequality conditioning on $\xi_2^{k+1}$, then $\xi_1^{k+1}$ and then $\{\xi_1^j,\xi_2^j\}_{j\leq k}$. Finally, we set $(z,x)=(z^*,x^*)$, and conclude \eqref{inequality-strongly-convex-average}.
\end{proof}
We are ready to prove Theorem \ref{Theorem-Strongly-Convex-Average}. For any $\lambda\in\br^p$, we have
\begin{eqnarray*}
& & l(x^*) + r_1(x^*) + r_2(z^*) - \ex [l(\tilde{x}^t)] - \ex [r_1(\tilde{x}^t)] - \ex[r_2(\tilde{z}^t)] + \lambda^\top\left( F\ex[\tilde{x}^t] - \ex[\tilde{z}^t]\right)\\ 
& \geq & \frac{1}{t+1}\sum_{k=0}^t \left\{ l(x^*) + r_1(x^*) + r_2(z^*) - \ex [l(\bar{x}^{k+1})] - \ex [r_1(\bar{x}^{k+1})] - \ex[r_2(z^{k+1})]\right. \\ 
& & \left. + \ex\left[ \left( \begin{array}{c} z^* - z^{k+1} \\ x^* - \bar{x}^{k+1} \\ \lambda - \bar{\lambda}^{k+1} \end{array} \right)^\top\left(\begin{array}{c} \bar{\lambda}^{k+1} \\ -F^\top\bar{\lambda}^{k+1} \\ F\bar{x}^{k+1} - z^{k+1}\end{array}\right)\right] \right\} \nonumber \\
& \geq & \frac{1}{t+1}\sum_{k=0}^t \left[ \frac{\mu(k+2)+2\tilde{L}}{4}\ex\left\| x^* - x^{k+1} \right\|^2 - \frac{\mu(k+1)+2\tilde{L}}{4}\ex\left\| x^* - x^k \right\|^2 - \frac{16\sigma^2}{\mu(k+1)}\right. \nonumber \\
& & \left. - \frac{1}{2\gamma}\ex\left\|\lambda - \lambda^k\right\|^2 + \frac{1}{2\gamma}\ex\left\|\lambda - \lambda^{k+1}\right\|^2\right] \nonumber \\
& \geq & - \frac{\mu+2\tilde{L}}{4(t+1)}\left\| x^* - x^0\right\|^2 - \frac{16\sigma^2\log(t+1)}{\mu(t+1)} - \frac{1}{2\gamma(t+1)}\left\|\lambda - \lambda^0\right\|^2. \nonumber  
\end{eqnarray*}
where the first inequality holds due to the convexity of $l$, $r_1$ and $r_2$. By the same argument as Theorem \ref{Theorem-Convex-Average}, we conclude \eqref{result-strongly-convex-average-1} and \eqref{result-strongly-convex-average-2}.

\subsection{Proof of Theorem \ref{Theorem-Strongly-Convex-Nonaverage}}
\begin{restatable}{lemma}{pLemmaStronglyConvexNonAverage}
\label{Lemma-Strongly-Convex-Non-Average}
Let $\left\{z^{k+1}, \bar{x}^{k+1},\bar{\lambda}^{k+1}, x^{k+1}, \lambda^{k+1}\right\}$ be generated by the SPDPEG Algorithm, and $\alpha^{k+1}$ and $c^{k+1}$ be defined in the main paper. For any optimal solution $\left(z^*, x^*\right)$, it holds that
\begin{eqnarray}\label{inequality-strongly-convex-non-average}
& & l(x^*) + r_1(x^*) + r_2(z^*) - \ex [l(\bar{x}^{k+1})] - \ex [r_1(\bar{x}^{k+1})] - \ex[r_2(z^{k+1})] \nonumber \\
& & + \ex\left[ \left( \begin{array}{c} z^* - z^{k+1} \\ x^* - \bar{x}^{k+1} \\ \lambda - \bar{\lambda}^{k+1} \end{array} \right)^\top\left(\begin{array}{c} \bar{\lambda}^{k+1} \\ -F^\top\bar{\lambda}^{k+1} \\ F\bar{x}^{k+1} - z^{k+1}\end{array}\right)\right] \nonumber \\
& \geq & \frac{\mu(k+4)+4\tilde{L}}{8}\ex\left\| x^* - x^{k+1} \right\|^2 - \frac{\mu(k+2)+4\tilde{L}}{8}\ex\left\| x^* - x^k \right\|^2 - \frac{32\sigma^2}{\mu(k+2)} \nonumber \\
& & - \frac{1}{2\gamma}\ex\left\|\lambda - \lambda^k\right\|^2 + \frac{1}{2\gamma}\ex\left\|\lambda - \lambda^{k+1}\right\|^2.
\end{eqnarray}
\end{restatable}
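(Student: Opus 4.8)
The plan is to reproduce, almost verbatim, the argument used for Lemma~\ref{Lemma-Strongly-Convex-Average}, changing only the bookkeeping forced by the new step-size $c^{k+1} = \frac{4}{\mu(k+2)+4\tilde{L}}$; the substantive work is already done in Lemma~\ref{Lemma-Unified}. I would start from the unified estimate \eqref{inequality-noisy-optimality}. First I would check, using the definition of $\tilde{L}$, that $\frac{1}{2\gamma} - 4c^{k+1}\sigma_{\max}(F^\top F) \geq 0$ and $\frac{1}{2c^{k+1}} - \frac{\gamma\sigma_{\max}(F^\top F)}{2} - 4c^{k+1}L^2 \geq 0$: this holds since $c^{k+1}\leq \tilde{L}^{-1}$ and $\tilde{L}$ dominates both $8\gamma\sigma_{\max}(F^\top F)+\mu$ and $\sqrt{8L^2+\gamma\sigma_{\max}(F^\top F)}+\mu$. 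I would also record that $c^{k+1} < \mu^{-1}$, which follows from $\tilde{L}\geq\mu$, so that $\frac{1}{2c^{k+1}}\|x^{k+1}-\bar{x}^{k+1}\|^2 \geq \frac{\mu}{2}\|x^{k+1}-\bar{x}^{k+1}\|^2$. Discarding the two nonnegative bracketed terms in \eqref{inequality-noisy-optimality} and retaining this $\frac{\mu}{2}$-term then brings that inequality to the same shape as in the strongly convex average case.

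Next I would split $G(z^{k+1},\bar{x}^{k+1},\bar{\lambda}^{k+1};\xi_2^{k+1}) = \nabla l(\bar{x}^{k+1}) + \bar{\delta}^{k+1} - F^\top\bar{\lambda}^{k+1}$ and invoke $\mu$-strong convexity of $l$ (Assumption~\ref{assumption_5}) to write $(x-\bar{x}^{k+1})^\top\nabla l(\bar{x}^{k+1}) \leq l(x) - l(\bar{x}^{k+1}) - \frac{\mu}{2}\|x-\bar{x}^{k+1}\|^2$. Substituting this into the reduced inequality brings $l(x)-l(\bar{x}^{k+1})$ onto the left-hand side (yielding the objective gap), leaves the dual coupling term $-(x-\bar{x}^{k+1})^\top F^\top\bar{\lambda}^{k+1}$, and produces on the right a $-\frac{\mu}{2}\|x-\bar{x}^{k+1}\|^2$ together with a leftover noise term $-(x-\bar{x}^{k+1})^\top\bar{\delta}^{k+1}$. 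The key algebraic step is then to merge $-\frac{\mu}{2}\|x-\bar{x}^{k+1}\|^2$ with the retained $\frac{\mu}{2}\|x^{k+1}-\bar{x}^{k+1}\|^2$ via $\|a\|^2 + \|b\|^2 \geq \frac12\|a+b\|^2$, which contributes an extra $\frac{\mu}{4}\|x-x^{k+1}\|^2$. Adding this to $\frac{1}{2c^{k+1}} = \frac{\mu(k+2)+4\tilde{L}}{8}$ gives precisely the coefficient $\frac{\mu(k+4)+4\tilde{L}}{8}$ on $\|x-x^{k+1}\|^2$, while the coefficient on $\|x-x^k\|^2$ stays $\frac{1}{2c^{k+1}} = \frac{\mu(k+2)+4\tilde{L}}{8}$, matching \eqref{inequality-strongly-convex-non-average}.

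Finally I would dispose of the stochastic terms by taking conditional expectations in the same nested order as in the earlier proofs: first over $\xi_2^{k+1}$ — so that, since $\bar{x}^{k+1}$ and $x^*$ are independent of $\xi_2^{k+1}$, the linear term $(x^*-\bar{x}^{k+1})^\top\bar{\delta}^{k+1}$ vanishes in expectation and $\ex\|\bar{\delta}^{k+1}\|^2 \leq \sigma^2$ by Assumption~\ref{assumption_4} — then over $\xi_1^{k+1}$ (giving $\ex\|\delta^{k+1}\|^2 \leq \sigma^2$), then over $\{\xi_1^j,\xi_2^j\}_{j\leq k}$. The noise contribution is then $4c^{k+1}\big(\ex\|\delta^{k+1}\|^2 + \ex\|\bar{\delta}^{k+1}\|^2\big) \leq 8c^{k+1}\sigma^2 = \frac{32\sigma^2}{\mu(k+2)+4\tilde{L}} \leq \frac{32\sigma^2}{\mu(k+2)}$, which supplies the $-\frac{32\sigma^2}{\mu(k+2)}$ term; setting $(z,x) = (z^*,x^*)$ concludes the proof. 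I do not expect a genuine obstacle — the only points demanding care are getting the order of conditioning right (so the linear noise term is eliminated rather than merely bounded) and the arithmetic that turns $\frac{1}{2c^{k+1}}+\frac{\mu}{4}$ into $\frac{\mu(k+4)+4\tilde{L}}{8}$.
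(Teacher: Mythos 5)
Your proposal is correct and follows essentially the same route as the paper: the paper's proof of Lemma~\ref{Lemma-Strongly-Convex-Non-Average} simply repeats the argument of Lemma~\ref{Lemma-Strongly-Convex-Average} from the unified estimate \eqref{inequality-noisy-optimality} with the new step-size $c^{k+1}=\frac{4}{\mu(k+2)+4\tilde{L}}$, dropping the nonnegative bracketed terms, keeping the $\frac{\mu}{2}\|x^{k+1}-\bar{x}^{k+1}\|^2$ term via $c^{k+1}<1/\mu$, invoking strong convexity, merging the two $\frac{\mu}{2}$-terms to gain $\frac{\mu}{4}\|x-x^{k+1}\|^2$, and then taking nested expectations to kill the linear noise term and bound $4c^{k+1}(\|\delta^{k+1}\|^2+\|\bar{\delta}^{k+1}\|^2)$ by $\frac{32\sigma^2}{\mu(k+2)}$. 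Your write-up actually makes explicit the coefficient bookkeeping that the paper leaves implicit, but there is no substantive difference in approach.
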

\begin{proof}
By the same argument as Lemma \ref{Lemma-Strongly-Convex-Average}, we conclude from \eqref{inequality-noisy-optimality} that
{\small 
\begin{eqnarray*}
& & r_1(x) + r_2(z) - r_1(\bar{x}^{k+1}) - r_2(z^{k+1}) + \left( \begin{array}{c} z - z^{k+1} \\ x - \bar{x}^{k+1} \\ \lambda - \bar{\lambda}^{k+1} \end{array} \right)^\top\left(\begin{array}{c} \bar{\lambda}^{k+1} \\ G\left( z^{k+1}, \bar{x}^{k+1},  \bar{\lambda}^{k+1}; \xi_2^{k+1}\right) \\ F\bar{x}^{k+1} - z^{k+1}\end{array}\right) \\
& \geq & \frac{1}{2c^{k+1}}\left\| x - x^{k+1} \right\|^2 - \frac{1}{2c^{k+1}}\left\| x - x^k \right\|^2 - 4c^{k+1}\left\|\delta^{k+1}\right\|^2 - 4c^{k+1}\left\|\bar{\delta}^{k+1}\right\|^2 \\
& & - \frac{1}{2\gamma}\left\|\lambda - \lambda^k\right\|^2 + \frac{1}{2\gamma}\left\|\lambda - \lambda^{k+1}\right\|^2 + \frac{\mu}{2}\left\| \bar{x}^{k+1} - x^{k+1}\right\|^2,  
\end{eqnarray*}
}and
\begin{eqnarray*}
& & \left( x - \bar{x}^{k+1}\right)^\top G\left( z^{k+1}, \bar{x}^{k+1}, \bar{\lambda}^{k+1}; \xi_2^{k+1}\right) \\
& \leq & l(x) - l(\bar{x}^{k+1}) - \frac{\mu}{2}\left\| x - \bar{x}^{k+1}\right\|^2 + \left( x - \bar{x}^{k+1}\right)^\top\left[- A^\top\bar{\lambda}^{k+1}\right] + \left( x - \bar{x}^{k+1}\right)^\top\bar{\delta}^{k+1}.
\end{eqnarray*}
Therefore, we conclude that
\begin{eqnarray*}
& & l(x) + r_1(x) + r_2(z) - l(\bar{x}^{k+1}) - r_1(\bar{x}^{k+1}) - r_2(z^{k+1}) + \left( \begin{array}{c} z - z^{k+1} \\ x - \bar{x}^{k+1} \\ \lambda - \bar{\lambda}^{k+1} \end{array} \right)^\top\left(\begin{array}{c} \bar{\lambda}^{k+1} \\ -F^\top\bar{\lambda}^{k+1} \\ F\bar{x}^{k+1} - z^{k+1} \end{array}\right) \\
& \geq & \frac{\mu(k+4)+4\tilde{L}}{8}\left\| x - x^{k+1} \right\|^2 - \frac{\mu(k+2)+4\tilde{L}}{8}\left\| x - x^k \right\|^2 - \frac{16}{\mu(k+2)+4\tilde{L}}\left( \left\|\delta^{k+1}\right\|^2 + \left\|\bar{\delta}^{k+1}\right\|^2\right) \\
& & - \frac{1}{2\gamma}\left\|\lambda - \lambda^k\right\|^2 + \frac{1}{2\gamma}\left\|\lambda - \lambda^{k+1}\right\|^2 - \left( x - \bar{x}^{k+1}\right)^\top\bar{\delta}^{k+1}. 
\end{eqnarray*}
Since $x^k$ and $\bar{x}^{k+1}$ are independent of $\xi_1^{k+1}$ and $\xi_2^{k+1}$ respectively, we take the expectation on both sides of above inequality conditioning on $\xi_2^{k+1}$, then $\xi_1^{k+1}$ and then $\{\xi_1^j,\xi_2^j\}_{j\leq k}$. Finally, we set $(z, x)=(z^*, x^*)$, and conclude \eqref{inequality-strongly-convex-non-average}.
\end{proof}
We are ready to prove Theorem \ref{Theorem-Strongly-Convex-Nonaverage}. For any $\lambda\in\br^p$, we have
\begin{eqnarray*}
& & l(x^*) + r_1(x^*) + r_2(z^*) - \ex [l(\tilde{x}^t)] - \ex [r_1(\tilde{x}^t)] - \ex[r_2(\tilde{z}^t)] + \lambda^\top\left(F\ex[\tilde{x}^t] - \ex[\tilde{z}^t]\right)\\ 
& \geq & \frac{2}{(t+1)(t+6)}\sum_{k=0}^t (k+3)\left\{ l(x^*) + r_1(x^*) + r_2(z^*) - \ex [l(\bar{x}^{k+1})] - \ex [r_1(\bar{x}^{k+1})] - \ex[r_2(z^{k+1})] \right. \\
& & \left. + \ex\left[ \left( \begin{array}{c} z^* - z^{k+1} \\ x^* - \bar{x}^{k+1} \\ \lambda - \bar{\lambda}^{k+1} \end{array} \right)^\top\left(\begin{array}{c} \bar{\lambda}^{k+1} \\ -F^\top\bar{\lambda}^{k+1} \\ F\bar{x}^{k+1} - z^{k+1}\end{array}\right)\right] \right\} \nonumber \\
& \geq & \frac{2}{(t+1)(t+6)}\sum_{k=0}^t \left(k+3\right)\left[ \frac{\mu(k+4)+4\tilde{L}}{8}\ex\left\| x^* - x^{k+1} \right\|^2 - \frac{\mu(k+2)+4\tilde{L}}{8}\ex\left\| x^* - x^k \right\|^2  \right. \nonumber \\
& & \left. - \frac{32\sigma^2}{\mu(k+2)} - \frac{1}{2\gamma}\ex\left\|\lambda - \lambda^k\right\|^2 + \frac{1}{2\gamma}\ex\left\|\lambda - \lambda^{k+1}\right\|^2\right] \nonumber \\
& \geq & - \frac{3\mu+2\tilde{L}}{4(t+1)(t+6)}\left\| x^* - x^0\right\|^2 - \frac{96\sigma^2}{\mu(t+6)} - \frac{2\left\|\lambda\right\|^2 + 2D_\lambda^2}{\gamma(t+1)} \nonumber,
\end{eqnarray*}
where the first inequality holds due to the convexity of $l$, $r_1$ and $r_2$. By the same argument as Theorem \ref{Theorem-Convex-Average}, we conclude \eqref{result-strongly-convex-nonaverage-1} and \eqref{result-strongly-convex-nonaverage-2}.

\section{Experiment}
We apply our proposed \textsf{SPDPEG} algorithm to solve following two popular problems: fused logistic regression (FLR)~\eqref{Prob:FLR} and graph-guided regularized logistic regression (GGRLR)~\eqref{Prob:GGRLR} \cite{Zhong-2014-Fast}, which are formulated as follows
\begin{equation}\label{Prob:FLR}
\textnormal{FLR:}~~~\min_x \ l(x) + \gamma \|x\|_1 + \lambda\|Lx\|_1,
\end{equation}
and
\begin{equation}\label{Prob:GGRLR}
\textnormal{GGRLR:}~~~\min_x \ l(x) + \frac{\gamma}{2}\left\|x\right\|_2^2 + \lambda\|Fx\|_1.
\end{equation}
Here {\small$l(x) = \frac{1}{N}\left[\sum\limits_{i=1}^N l(x,\xi_i)\right]$}, where $l(x,\xi_i)$ is the logistic loss on $\xi_i$ and $\lambda>0$ is a parameter. $L$ and $F$ are penalty matrices promoting the desired sparse structure of $x$. Specifically, $L\in\mathbb R^{(n-1)\times n}$ in problem~\eqref{Prob:FLR} is specified as a matrix with all ones on the diagonal, negative ones on the super-diagonal and zeros elsewhere, and $F$ in problem~\eqref{Prob:GGRLR} is generated by sparse inverse covariance selection \cite{Scheinberg-2010-Sparse}.

\begin{table}[t]
\renewcommand{\arraystretch}{0.8} 
\caption{Statistics of datasets.}
\label{tab:data}
\begin{center}
\begin{tabular}{c|c|c} \hline
dataset & number of samples & dimensionality \\ \hline 
\textit{splice} & 1000 & 60 \\ 
\textit{svmguide3} & 1243 & 21\\ 
\textit{mushrooms} & 8,124 & 112\\
\textit{a9a} & 32,561 & 123 \\
\textit{w8a} & 64,700 & 300 \\ 
\textit{hitech} & 2,301 & 10,080 \\
\textit{k1b} & 2,340 & 21,839 \\ 
\textit{classic} & 7,094 & 41,681 \\
\hline
\end{tabular}
\end{center}
\end{table}

In the experiments, we compare our \textsf{SPDPEG} algorithm with the \textsf{EGADM} algorithm \cite{Lin-2015-Extragradient} and six existing stochastic ADMM-type algorithms \footnote{We use the implementation of \textsf{SADMM}, \textsf{OPG-ADMM} and \textsf{RDA-ADMM} provided by the authors and two adaptive ADMM according to \cite{Zhao-2015-Adaptive}}: \textsf{SGADM} \cite{Gao-2014-Information}, \textsf{SADMM} \cite{Ouyang-2013-Stochastic}, \textsf{OPG-ADMM} \cite{Suzuki-2013-Dual}, \textsf{RDA-ADMM} \cite{Suzuki-2013-Dual}, and two adaptive SADMM (\textit{i.e.}, \textsf{SADMMdiag} and \textsf{SADMMfull})\cite{Zhao-2015-Adaptive}. We exclude online ADMM \cite{Wang-2012-Online} since \cite{Suzuki-2013-Dual} has shown that \textsf{RDA-ADMM} performs better than online ADMM. \textsf{FSADMM} \cite{Zhong-2014-Fast} is also excluded since it requires storage of all gradients, which results in impractical performance in some complex applications \cite{Johnson-2013-Accelerating}.

\begin{figure}[t]
\includegraphics[width=1\textwidth]{./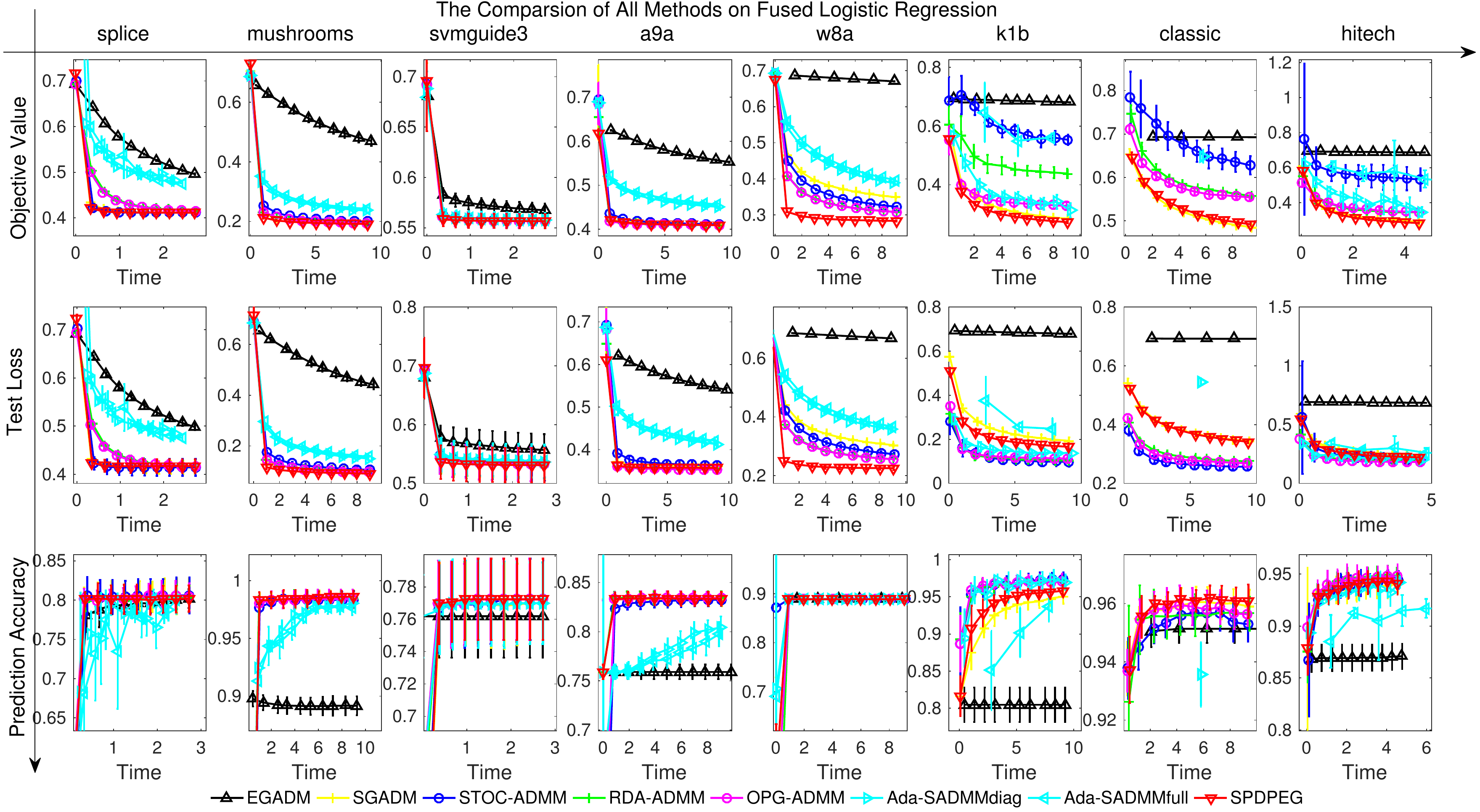}
\caption{Comparison of \textsf{SPDPEG} with \textsf{SGADM}, \textsf{SADMM}, \textsf{RDA-ADMM}, \textsf{OPG-ADMM}, \textsf{SADMMdiag} and \textsf{SADMMfull} on \textbf{Fused Logistic Regression} Task. 
\textbf{First Row}: Average objective values. 
\textbf{Second Row}: Average test losses. 
\textbf{Third Row}: Average prediction accuracies.}
\label{fig-FLR-complexity}
\end{figure}

\begin{figure}[t]
\includegraphics[width=1\textwidth]{./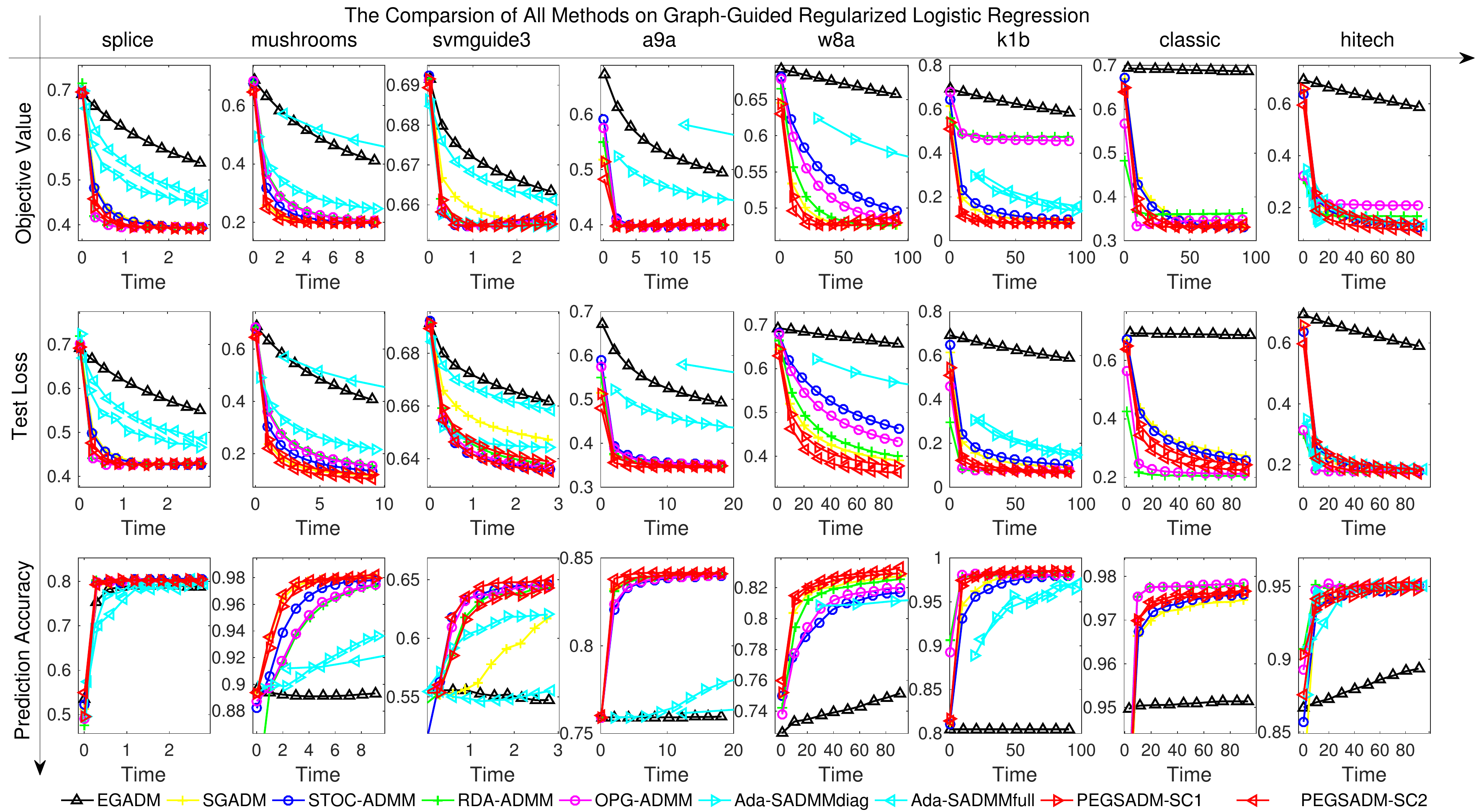}\caption{Comparison of \textsf{SPDPEG-SC1} (Uniformly Averaged) and \textsf{SPDPEG-SC2} (Non-Uniformly Averaged) with \textsf{SGADM}, \textsf{SADMM}, \textsf{RDA-ADMM}, \textsf{OPG-ADMM}, \textsf{SADMMdiag} and \textsf{SADMMfull} on \textbf{Graph-Guided Regularized Logistic Regression} Task. 
\textbf{First Row}: Average objective values. 
\textbf{Second Row}: Average test losses. 
\textbf{Third Row}: Average prediction accuracies.}
\label{fig-GGRLR-complexity}
\end{figure}

The experiments are conducted on five binary classification datasets: \textit{splice}, \textit{svmguide3}, \textit{mushrooms}, \textit{a9a}, and \textit{w8a} \footnote{https://www.csie.ntu.edu.tw/$\sim$cjlin/libsvm/} with large number of samples, 
\textit{classic}, \textit{hitech}, \textit{k1b} \footnote{https://www.shi-zhong.com/software/docdata.zip} with high dimensionality. We set the parameters of SPDPEG exactly following our theory while using the cross validation to select the parameters for other algorithms. For each dataset, we calculate the lipschitz constant $L$ as its classical upper bound $\hat{L} = 0.25\max_{1\leq i\leq n}\|a_i\|^2$. The regularization parameter $\lambda = 5\times 10^{-3}$ and $\gamma = 5\times 10^{-4}$ for problem~\eqref{Prob:FLR}, and $\lambda = 10^{-5}$ and $\gamma = 10^{-2}$ for problem~\eqref{Prob:GGRLR}. To reduce statistical variability, experimental results are repeated 5 rounds. Additionally, we use the metrics including objective value, test loss and prediction accuracy to compare our method with other methods. The ``objective value" means the sum of the loss function and regularized terms evaluated on a training data sample, while the ``test loss" means the value of the loss function evaluated on a test data sample. Specifically, we use objective function values on training datasets, test losses (\textit{i.e.}, $l(x)$) on test datasets, and prediction accuracy on test datasets.

Figure \ref{fig-FLR-complexity} shows the objective value, test loss and prediction accuracy as the functions of the time costs on the FLR task, where the objective function is convex but not necessarily strongly convex.
We observe that our method mostly achieves the best performance, followed by six stochastic ADMM-type algorithms, all of which outperform \textsf{EGADM} by a large margin. We find that the prediction accuracy of the \textsf{SPDPEG} algorithm is competitive with other algorithms, which supports the use of extra-gradient in the \textsf{SPDPEG} algorithm. The performance of our \textsf{SPDPEG} algorithm on six datasets is the most stable and effective among all methods.

We further compare our algorithm with other algorithms on the GGRLR task, where the objective function is strongly convex. We use both uniformly and non-uniformly averaged iterates, noted as \textsf{SPDPEG-SC1} (Uniformly Averaged) and \textsf{SPDPEG-SC2} (Non-Uniformly Averaged). The experimental results presented in Figure \ref{fig-GGRLR-complexity} show that our algorithm consistently outperforms other algorithms, and exhibits the advantage with non-uniformly averaged iterates over its counterpart with uniformly averaged iterates. This matches our analysis in the previous sections.

\section{Conclusions}
In this paper, we proposed a novel algorithm, namely \textsf{Stochastic Primal-Dual Proximal ExtraGradient (SPDPEG)}, to resolve stochastic minimization problems including two regularization terms, one of which is composed with a linear function $F(x)$, as shown in problem~\eqref{prob}. Problem~\eqref{prob} is computationally difficult when the penalty matrix $F$ is non-diagonal or the number of training samples is large.

Inspired by the nice efficiency of \textsf{EGADM}, we developed an ADM-type optimization scheme that employs proximal noisy extra-gradient descent to achieve reasonable numerical efficiency and stability. For general convex objectives, we showed that the uniformly average iterates converge in expectation with the rate of $O(1/\sqrt{t})$; while for strongly convex objectives, the uniformly and non-uniformly average iterates generated by the \textsf{SPDPEG} algorithm were proven to converge in expectation with the $O(\log(t)/t)$ and $O(1/t)$ rates, respectively. It is worth mentioning that these rates are both known to be best possible for first-order stochastic optimization algorithms. The numerical experiments conducted on fused logistic regression and graph-guided regularized logistic regression problems demonstrated that our proposed algorithm consistently outperforms the other competing stochastic algorithms. A future research direction is to consider incorporating variance reduction techniques into the \textsf{SPDPEG} algorithm.

\bibliographystyle{plain}
\bibliography{ref}

\begin{thebibliography}{10}

\bibitem{Agarwal-2012-Information}
A.~Agarwal, P.~L. Bartlett, P.~Ravikumar, and M.~J. Wainwright.
\newblock Information-theoretic lower bounds on the oracle complexity of
  stochastic convex optimization.
\newblock {\em IEEE Transactions on Information Theory}, 58(5):32--35, 2012.

\bibitem{Azadi-2014-Towards}
S.~Azadi and S.~Sra.
\newblock Towards an optimal stochastic alternating direction method of
  multipliers.
\newblock In {\em ICML}, pages 620--628, 2014.

\bibitem{Bonettini-2011-Alternating}
S.~Bonettini and V.~Ruggiero.
\newblock An alternating extragradient method for total variation-based image
  restoration from poisson data.
\newblock {\em Inverse Problems}, 27(9):095001, 2011.

\bibitem{Boyd-2011-Distributed}
S.~Boyd, N.~Parikh, E.~Chu, B.~Peleato, and J.~Eckstein.
\newblock Distributed optimization and statistical learning via the alternating
  direction method of multipliers.
\newblock {\em Foundations and Trends{\textregistered} in Machine Learning},
  3(1):1--122, 2011.

\bibitem{Cortes-1995-Support}
C.~Cortes and V.~Vapnik.
\newblock Support vector networks.
\newblock {\em Machine learning}, 20(3):273--297, 1995.

\bibitem{Duchi-2009-Efficient}
J.~Duchi and Y.~Singer.
\newblock Efficient online and batch learning using forward backward splitting.
\newblock {\em Journal of Machine Learning Research}, 10:2899--2934, 2009.

\bibitem{Friedman-2009-Elements}
J.~Friedman, T.~Hastie, and R.~Tibshirani.
\newblock The elements of statistical learning: Data mining, inference, and
  prediction.
\newblock {\em Springer Series in Statistics}, 2009.

\bibitem{Gao-2014-Information}
X.~Gao, B.~Jiang, and S.~Zhang.
\newblock On the information-adaptive variants of the admm: an iteration
  complexity perspective.
\newblock {\em Optimization Online}, 2014.

\bibitem{Johnson-2013-Accelerating}
R.~Johnson and T.~Zhang.
\newblock Accelerating stochastic gradient descent using predictive variance
  reduction.
\newblock In {\em NIPS}, pages 315--323, 2013.

\bibitem{Korpelevich-1976-Extragradient}
G.~Korpelevich.
\newblock The extragradient method for finding saddle points and other
  problems.
\newblock {\em Ekonomika i Matematicheskie Metody}, 12:747--756, 1976.

\bibitem{Korpelevich-1983-Extragradient}
G.~Korpelevich.
\newblock Extrapolation gradient methods and relation to modified lagrangeans.
\newblock {\em Ekonomika i Matematicheskie Metody}, 19:694--703, 1983.

\bibitem{Lin-2015-Extragradient}
T.~Lin, S.~Ma, and S.~Zhang.
\newblock An extragradient-based alternating direction method for convex
  minimization.
\newblock {\em Foundations of Computational Mathematics}, pages 1--25, 2015.

\bibitem{Monteiro-2010-Complexity}
R.~D.~C. Monteiro and B.~F. Svaiter.
\newblock On the complexity of the hybrid proximal extragradient method for the
  iterates and the ergodic mean.
\newblock {\em SIAM Journal on Optimization}, 20(6):2755--2787, 2010.

\bibitem{Monteiro-2011-Complexity}
R.~D.~C. Monteiro and B.~F. Svaiter.
\newblock Complexity of variants of tseng's modified fb splitting and
  korpelevich's methods for hemi-variational inequalities with applications to
  saddle-point and convex optimization problems.
\newblock {\em SIAM Journal on Optimization}, 21(4):1688--1720, 2011.

\bibitem{Monteiro-2013-Iteration}
R.~D.~C. Monteiro and B.~F. Svaiter.
\newblock Iteration-complexity of block-decomposition algorithms and the
  alternating direction method of multipliers.
\newblock {\em SIAM Journal on Optimization}, 23(1):475--507, 2013.

\bibitem{Nemirovski-2004-Prox}
A.~Nemirovski.
\newblock Prox-method with rate of convergence o(1/t) for variational
  inequalities with lipschitz continuous monotone operators and smooth
  convex-concave saddle point problems.
\newblock {\em SIAM Journal on Optimization}, 15(1):229--251, 2004.

\bibitem{Noor-2003-New}
M.~A. Noor.
\newblock New extragradient-type methods for general variational inequalities.
\newblock {\em Journal of Mathematical Analysis and Applications},
  277(2):379--394, 2003.

\bibitem{Ouyang-2013-Stochastic}
H.~Ouyang, N.~He, L.~Tran, and A.~Gray.
\newblock Stochastic alternating direction method of multipliers.
\newblock In {\em ICML}, pages 80--88, 2013.

\bibitem{Parikh-2014-Proximal}
N.~Parikh and S.~Boyd.
\newblock Proximal algorithms.
\newblock {\em Foundations and Trends{\textregistered} in Optimization},
  1(3):127--239, 2014.

\bibitem{Qiao-2016-Stochastic}
L.~Qiao, T.~Lin, Y-G. Jiang, F.~Yang, W.~Liu, and X.~Lu.
\newblock On stochastic primal-dual hybrid gradient approach for compositely
  regularized minimization.
\newblock In {\em ECAI}, volume 285, pages 167--174. IOS Press, 2016.

\bibitem{Scheinberg-2010-Sparse}
K.~Scheinberg, S.~Ma, and D.~Goldfarb.
\newblock Sparse inverse covariance selection via alternating linearization
  methods.
\newblock In {\em NIPS}, pages 2101--2109, 2010.

\bibitem{Solodov-1999-Hybrid}
M.~V. Solodov and B.~F. Svaiter.
\newblock A hybrid approximate extragradient-proximal point algorithm using the
  enlargement of a maximal monotone operator.
\newblock {\em Set-Valued Analysis}, 7(4):323--345, 1999.

\bibitem{Suzuki-2013-Dual}
T.~Suzuki.
\newblock Dual averaging and proximal gradient descent for online alternating
  direction multiplier method.
\newblock In {\em ICML}, pages 392--400, 2013.

\bibitem{Suzuki-2014-Stochastic}
T.~Suzuki.
\newblock Stochastic dual coordinate ascent with alternating direction method
  of multipliers.
\newblock In {\em ICML}, pages 736--744, 2014.

\bibitem{Tibshirani-1996-Regression}
R.~Tibshirani.
\newblock Regression shrinkage and selection via the lasso.
\newblock {\em Journal of the Royal Statistical Society. Series B
  (Methodological)}, pages 267--288, 1996.

\bibitem{Tibshirani-2005-Sparsity}
R.~Tibshirani, M.~Saunders, S.~Rosset, J.~Zhu, and K.~Knight.
\newblock Sparsity and smoothness via the fused lasso.
\newblock {\em Journal of the Royal Statistical Society: Series B (Statistical
  Methodology)}, 67(1):91--108, 2005.

\bibitem{Wang-2012-Online}
H.~Wang and A.~Banerjee.
\newblock Online alternating direction method.
\newblock In {\em ICML}, pages 1119--1126, 2012.

\bibitem{Yang-2013-Linearized}
J.~Yang and X.~Yuan.
\newblock Linearized augmented lagrangian and alternating direction methods for
  nuclear norm minimization.
\newblock {\em Mathematics of Computation}, 82(281):301--329, 2013.

\bibitem{Zhao-2015-Adaptive}
P.~Zhao, J.~Yang, T.~Zhang, and P.~Li.
\newblock Adaptive stochastic alternating direction method of multipliers.
\newblock In {\em ICML}, pages 69--77, 2015.

\bibitem{Zheng-2016-Stochastic}
S.~Zheng and J.~T. Kwok.
\newblock Stochastic variance-reduced admm.
\newblock {\em ArXiv Preprint: 1604.07070}, 2016.

\bibitem{Zhong-2014-Fast}
W.~Zhong and J.~Kwok.
\newblock Fast stochastic alternating direction method of multipliers.
\newblock In {\em ICML}, pages 46--54, 2014.

\end{thebibliography}

\end{document}